\title{A Minimax Approach to Supervised Learning}
\author{
Farzan Farnia\thanks{Department of Electrical Engineering, Stanford University, Stanford, CA  94305.} \\
\texttt{farnia@stanford.edu} \\
\And
David Tse$^*$ \\
\texttt{dntse@stanford.edu} \\
}
\newenvironment{subtheorem}[1]{%
  \def\subtheoremcounter{#1}%
  \refstepcounter{#1}%
  \protected@edef\theparentnumber{\csname the#1\endcsname}%
  \setcounter{parentnumber}{\value{#1}}%
  \setcounter{#1}{0}%
  \expandafter\def\csname the#1\endcsname{\theparentnumber.\Alph{#1}}%
  \ignorespaces
}{%
  \setcounter{\subtheoremcounter}{\value{parentnumber}}%
  \ignorespacesafterend
}
\newcounter{parentnumber}
\newtheorem{thm}{Theorem}
\newtheorem{cor}{Corollary}
\newcommand{\E}{\mathbb{E}}
\newcommand{\X}{\mathcal{X}}
\newcommand{\A}{\mathcal{A}}
\begin{document}

\maketitle

\begin{abstract}
Given a task of predicting $Y$ from $X$, a  loss function $L$, and a set of probability distributions $\Gamma$ on $(X,Y)$, what is the optimal decision rule minimizing the worst-case expected loss over $\Gamma$? In this paper, we address this question by introducing a generalization of the principle of maximum entropy. Applying this principle to sets of distributions with marginal on $X$ constrained to be the empirical marginal from the data, we develop a general minimax approach for supervised learning problems. While for some loss functions such as squared-error and log loss, the minimax approach rederives well-knwon regression models, for the 0-1 loss it results in a new linear classifier which we call the maximum entropy machine. The maximum entropy machine minimizes the worst-case 0-1 loss over the structured set of distribution, and by our numerical experiments can outperform other well-known linear classifiers such as SVM. We also prove a bound on the generalization worst-case error in this minimax framework.
\end{abstract}

\section{Introduction}

Supervised learning, the task of inferring a function that predicts a target $Y$ from a feature vector $\mathbf{X}=(X_1,\ldots,X_d)$ by using $n$ labeled training samples $\lbrace(\mathbf{x}_1,y_1),\ldots ,(\mathbf{x}_n,y_n)\rbrace$, has been a problem of central interest in machine learning. Given the underlying distribution $\tilde{P}_{\mathbf{X},Y}$, the optimal prediction rules had long been studied and formulated in the statistics literature. However, the advent of high-dimensional problems raised this important question: What would be a good prediction rule when we do not have enough samples to estimate the underlying distribution?

To understand the difficulty of learning in high-dimensional settings, consider a genome-based classification task where we seek to predict a binary trait of interest $Y$ from an observation of $3,000,000$ SNPs, each of which can be considered as a discrete variable $X_i \in \lbrace 0,1,2 \rbrace$. Hence, to estimate the underlying distribution we need $O(3^{3,000,000})$ samples.

With no possibility of estimating the underlying $P^*$ in such problems, several approaches have been proposed to deal with high-dimensional settings. The standard approach in statistical learning theory is empirical risk minimization (ERM) \cite{vapnik}. ERM learns the prediction rule by minimizing an approximated loss under the empirical distribution of samples. However, to avoid overfitting, ERM restricts the set of allowable decision rules to a class of functions with limited complexity measured through its VC-dimension. However, the ERM problem for several interesting loss functions such as 0-1 loss is computationally intractable \cite{NP_01}.

This paper focuses on a complementary approach to ERM where one can learn the prediction rule through minimizing a decision rule's worst-case loss over a larger set of distributions $\Gamma(\hat{P})$ centered at the empirical distribution $\hat{P}$. 
In other words, instead of restricting the class of decision rules,  we consider and evaluate all possible decision rules, but based on a more stringent criterion that they will have to perform well over all distributions in $\Gamma(\hat{P})$. As seen in Figure \ref{Fig: MCE}, this minimax approach can be broken into three main steps: 
\begin{enumerate} [noitemsep,topsep=0pt]
\item We compute the empirical distribution $\hat{P}$ from the data,
\item We form a distribution set $\Gamma(\hat{P})$ based on $\hat{P}$,
\item We learn a prediction rule $\psi^*$ that minimizes the worst-case expected loss over $\Gamma(\hat{P})$.
\end{enumerate}

An important example of the above minimax approach is the linear regression  fitted via the least-squares method, which also minimizes the worst-case squared-error over all distributions with the same first and second order moments as empirically estimated from the samples. Some other special cases of this minimax approach, which are also based on learning a prediction rule from low-order marginal/moments, have been addressed in the literature: 
\cite{mpm} solves a robust minimax classification problem for continuous settings with fixed first and second-order moments; \cite{DCC} develops a classification approach by minimizing the worst-case hinge loss subject to fixed low-order marginals; \cite{DRC} fits a model minimizing the maximal correlation under fixed pairwise marginals to design a robust classification scheme. In this paper, we develop a general minimax approach for supervised learning problems with arbitrary loss functions.

To formulate Step 3 in Figure \ref{Fig: MCE}, given a general loss function $L$ and set of distribution $\Gamma(\hat{P})$ we generalize the problem formulation discussed at \cite{DCC} to
\begin{equation} \label{general minimax}
\underset{\psi \in \boldsymbol{\Psi}}{\arg\!\min} \: \max_{P\in \Gamma(\hat{P})}\: \E\left[\, L\bigl( Y,\psi(\mathbf{X})\bigr)\, \right].
\end{equation} 
Here, $\boldsymbol{\Psi}$ is the space of all decision rules. Notice the difference with the ERM problem where $\boldsymbol{\Psi}$ was restricted to smaller function classes while $\Gamma(\hat{P})=\lbrace \hat{P} \rbrace$.

If we have to predict $Y$ with no access to $\mathbf{X}$, \eqref{general minimax} reduces to
\begin{equation} \label{general minimax, unconditional}
\underset{a \in \mathcal{A}}{\min} \: \max_{P\in \Gamma(\hat{P})}\: \E\left[\, L\bigl( Y,a)\bigr)\, \right],
\end{equation}
where $\mathcal{A}$ is the action space for loss function $L$. For $L$ being the logarithmic loss function (log loss), Topsoe \cite{topsoe1979information} reduces \eqref{general minimax, unconditional} to the entropy maximization problem over $\Gamma(\hat{P})$. This result is shown based on Sion's minimax theorem \cite{sion} which shows under some mild conditions one can exchange the order of min and max in the minimax problem. Note that when $L$ is log loss, the maximin problem corresponding to \eqref{general minimax, unconditional} results in a maximum entropy problem. 
 More generally, this result provides a game theoretic interpretation of the principle of maximum entropy introduced by Jaynes in \cite{jaynes}. By the principle of maximum entropy, one should select and act based on a distribution in $\Gamma(\hat{P})$ which maximizes the Shannon entropy.

Grünwald and Dawid \cite{MaxEnt} generalize the minimax theorem for log loss in \cite{topsoe1979information} to other loss functions, showing \eqref{general minimax, unconditional} and its corresponding maximin problem have the same solution for a large class of loss functions. They further interpret the maximin problem as maximizing a generalized entropy function, which motivates generalizing the principle of maximum entropy for other loss functions: Given loss function $L$, select and act based on a distribution maximizing the generalized entropy function for $L$. Based on their minimax interpretation, the maximum entropy principle can be used for a general loss function to find and interpret the optimal action minimizing the worst-case expected loss in \eqref{general minimax, unconditional}. 

\begin{figure}
  \centering
  \begin{minipage}[t]{0.4\textwidth}
   \includegraphics[width=1.4\textwidth]{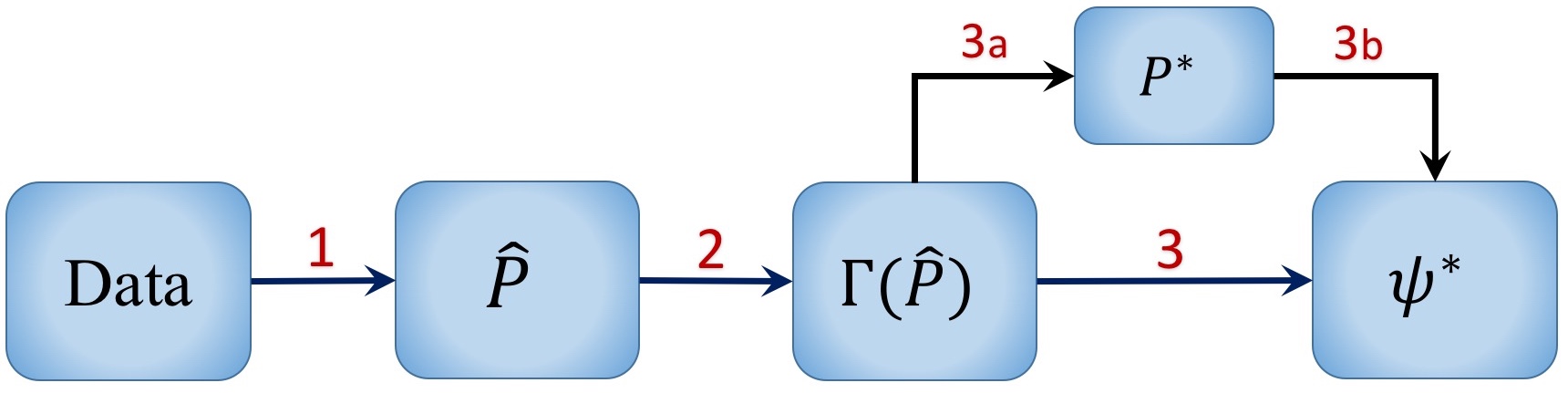}
     \captionsetup{justification=centering}
 \caption{Minimax Approach}    \label{Fig: MCE}
  \end{minipage}
  \hfill
  \begin{minipage}[t]{0.4\textwidth}
    \includegraphics[width=1\textwidth]{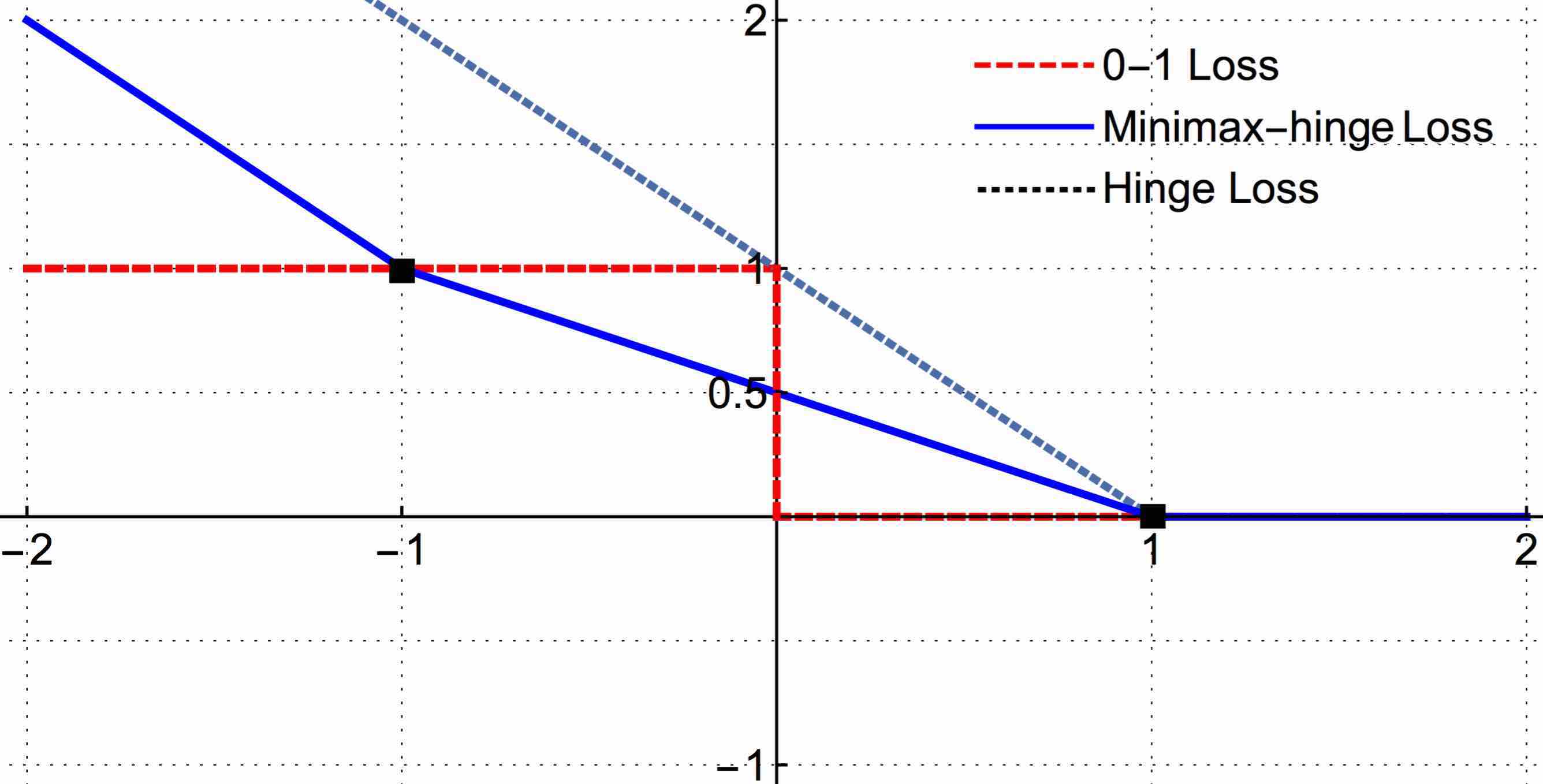}
      \captionsetup{justification=centering}
    \caption{Minimax-hinge Loss} \label{Fig:DHL}
  \end{minipage}
\end{figure}

How can we use the principle of maximum entropy to solve \eqref{general minimax}  where we observe $\mathbf{X}$ as well? 
A natural idea is to apply the maximum entropy principle to the conditional $P_{Y\vert \mathbf{X} =\mathbf{x}}$ instead of the marginal $P_Y$. This idea motivates a generalized version of the principle of maximum entropy, which we call the \emph{principle of maximum conditional entropy}. The conditional entropy maximization for prediction problems was first introduced and interpreted by Berger et al. \cite{berger}. They indeed proved that the logistic regression model maximizes the conditional entropy over a particular set of distributions. In this work, we extend the minimax interpretation from the maximum entropy principle \cite{topsoe1979information,MaxEnt} to the maximum conditional entropy principle, which reveals how the maximum conditional entropy principle breaks Step 3 into two smaller steps: 
\begin{itemize} [noitemsep,topsep=0pt]
\item[3a. ] We search for $P^*$ the distribution maximizing the conditional entropy over $\Gamma(\hat{P})$,
\item[3b. ] We find $\psi^*$ the optimal decision rule for $P^*$.
\end{itemize} 
Although the principle of maximum conditional entropy characterizes the solution to \eqref{general minimax}, computing the maximizing distribution is hard in general. In \cite{minMI}, the authors propose a conditional version of the principle of maximum entropy, for the specific case of Shannon entropy, and draw the principle's connection to \eqref{general minimax}. They call it the principle of minimum mutual information,
by which one should predict based on the distribution minimizing mutual information among $\mathbf{X}$ and $Y$. 
However, they develop their theory targeting a broad class of distribution sets, which results in a convex problem, yet the number of variables is exponential in the dimension of the problem.

To overcome this issue, we propose a specific structure for the distribution set by matching the marginal $P_{\mathbf{X}}$ of all the joint distributions $P_{\mathbf{X,Y}}$ in $\Gamma(\hat{P})$ to the empirical marginal $\hat{P}_{\mathbf{X}}$ while matching only the cross-moments between $\mathbf{X}$ and $Y$ with those of the empirical distribution $\hat{P}_{\mathbf{X,Y}}$.  We show that this choice of $\Gamma(\hat{P})$ has two key advantages: 1) the minimax decision rule $\psi^*$ can be computed efficiently; 2) the minimax generalization error can be controlled by allowing a level of uncertainty in the matching of the cross-moments, which can be viewed as regularization in the minimax framework.

More importantly, by applying this idea for the generalized conditional entropy we generalize the duality shown in \cite{berger} among the maximum conditional Shannon entropy problem and the maximum likelihood problem for fitting the logistic regression model. 
%
In particular, we show how under quadratic and logarithmic loss functions our framework leads to the linear regression and logistic regression models respectively. Through the same framework, we also derive a classifier which we call the \emph{maximum entropy machine (MEM)}. We also show how regularization in the empirical risk minimization problem can be interpreted as expansion of the uncertainty set in the dual maximum conditional entropy problem, which allows us to bound the generalization worst-case error in the minimax framework.
\section{Two Examples}
In this section, we highlight two important examples to compare the minimax approach with the ERM approach. These examples also motivate a particular structure for the distribution set in the minimax approach, which is discussed earlier in the introduction.
\subsection{Regression: Squared-error}
Consider a regression task to predict a continuous $Y\in\mathbb{R}$ from feature vector $\mathbf{X}\in \mathbb{R}^d$. A well-known approach for this task is the linear regression, where one considers the set of linear prediction rules $
\{ \psi:\: \exists \boldsymbol{\beta}\in\mathbb{R}^d,\: \forall\mathbf{x}\in\mathbb{R}^d:\, \psi(\mathbf{x})=\boldsymbol{\beta}^T\mathbf{x}\}$. The ERM problem over this function class is the least-squares problem, where given samples $(\mathbf{x}_i,y_i)_{i=1}^n$ we solve
\begin{equation}
\min_{\boldsymbol{\beta}}\: \frac{1}{n}\sum_{i=1}^n \bigl( y_i-\boldsymbol{\beta}^T\mathbf{x}_i \bigr)^2.
\end{equation}
Interestingly, the minimax approach for the squared-error loss also results in the linear regression and least-squares method. Consider the space of functions $\Psi=\{ \psi:\mathbb{R}^d\rightarrow \mathbb{R}\}$ and define $\Gamma$ as the following set of distributions fixing the cross-moments $\mathbf{E}[Y\mathbf{X}]$ and the $P_\mathbf{X}$ marginal using the data
\begin{equation} \label{Gamma: Definition, squared-error}
\Gamma =\biggl\{\, P_{\mathbf{X},Y}:\,\; P_{\mathbf{X}}=\hat{P}_{\mathbf{X}},\; \mathbb{E}[Y\mathbf{X}]= \frac{1}{n}\sum_{i=1}^n y_i\mathbf{x}_i, \; \mathbb{E}[Y^2]=\frac{1}{n}\sum_{i=1}^n y_i^2\,\biggr\},
\end{equation}
where $\hat{P}_{\mathbf{X}}$ is the empirical marginal $P_{\mathbf{X}}$. Then, in Section 4 we show if we solve the minimax problem
\begin{equation} \label{minimax: linear regression}
\min_{\psi \in \Psi}\: \max_{P\in \Gamma}\: \mathbb{E}\left[ \, \bigl(Y- \psi(\mathbf{X})\bigr)^2 \,\right]
\end{equation}
the minimax optimal $\psi^*$ is a linear function which is the same as the solution to the least-squares problem. This simple example motivates the minimax approach and $\Gamma$ defined above by fixing the cross-moments and $P_\mathbf{X}$ marginal. Note that the maximin problem corresponding to \eqref{minimax: linear regression} is
\begin{equation} \label{maximin: linear regression}
 \max_{P\in \Gamma}\: \min_{\psi \in \Psi}\: \mathbb{E}\left[ \, \bigl(Y- \psi(\mathbf{X})\bigr)^2 \,\right] \: = \: \max_{P\in \Gamma}\: \mathbb{E}\bigl[\, \text{\rm Var}(Y|\mathbf{X})\, \bigr],
\end{equation}
where $\mathbb{E}[ \text{\rm Var}(Y|\mathbf{X})]$ is in fact the generalized conditional entropy for the squared-error loss function.
\subsection{Classification: 0-1 Loss}
0-1 loss is a loss function of central interest for the classification task. In the binary classification problem, the empirical risk minimization problem over linear decision rules is commonly formulated as
\begin{equation}
\min_{\boldsymbol{\beta}}\: \frac{1}{n}\sum_{i=1}^n\, \mathbf{1}\bigl(\, y_i\boldsymbol{\beta}^T\mathbf{x}_i \le 0\, \bigr)
\end{equation}
where $\mathbf{1}$ denotes the indicator function. This ERM problem to minimzie the number of missclassifications over the training samples is known to be non-convex and NP-hard \cite{NP_01}. To resolve this issue in practice, the 0-1 loss is replaced with a surrogate loss function. The hinge loss is an important example of a surrogate loss function which is empirically minimized by the Support Vector Machine (SVM) \cite{SVM} and is defined for a binary $Y\in \{-1,+1 \}$ as
\begin{equation}
\ell_{\text{\rm \scriptsize hinge}}\bigl(\, y\, , \,\boldsymbol{\beta}^T\mathbf{x}\,\bigr)=\max\bigl\lbrace \, 0\, , \, 1 -\boldsymbol{\beta}^T\mathbf{x}y \, \bigr\rbrace.
\end{equation}

On the other hand, one can change the loss function from squared-error to 0-1 loss and solve the minimax problem \eqref{minimax: linear regression} instead of empirical risk minimization. Then, by swapping the order of min and max as
\begin{equation}
\min_{\psi \in \Psi}\: \max_{P\in \Gamma}\: \mathbb{E}\left[ \, L_{\text{\rm 0-1}}(Y , \psi(\mathbf{X}) ) \,\right] \, =\,  \max_{P\in \Gamma}\: \min_{\psi \in \Psi}\:  \mathbb{E}\left[ \, L_{\text{\rm 0-1}}(Y , \psi(\mathbf{X}) ) \,\right] \, = \,  \max_{P\in \Gamma}\: H_{\text{\rm 0-1}}(Y|\mathbf{X}),
\end{equation}
we reduce the 0-1 loss minimax problem to the maximization of a concave objective $H_{\text{\rm 0-1}}(Y|\mathbf{X})$ over a convex set of probability distributions $\Gamma$. Therefore, unlike the ERM problem with 0-1 loss, this minimax problem can be solved efficiently by the MEM method. In fact, MEM solves the maximum conditional entropy problem by reducing it to a convex ERM problem. For a binary $Y\in \{-1,+1\}$, the new ERM problem has a loss function to which we call the \emph{minimax hinge loss} defined as
\begin{equation}
\ell_{\text{\rm \scriptsize mmhinge}}\bigl(\, y\, , \,\boldsymbol{\beta}^T\mathbf{x}\,\bigr)=\max\biggl\lbrace \, 0\, , \, \frac{1-\boldsymbol{\beta}^T\mathbf{x}y}{2} \, ,\, -\boldsymbol{\beta}^T\mathbf{x}y \biggr\rbrace.
\end{equation}
\begin{figure}[t]
  \centering 
  \includegraphics[width=0.8\textwidth]{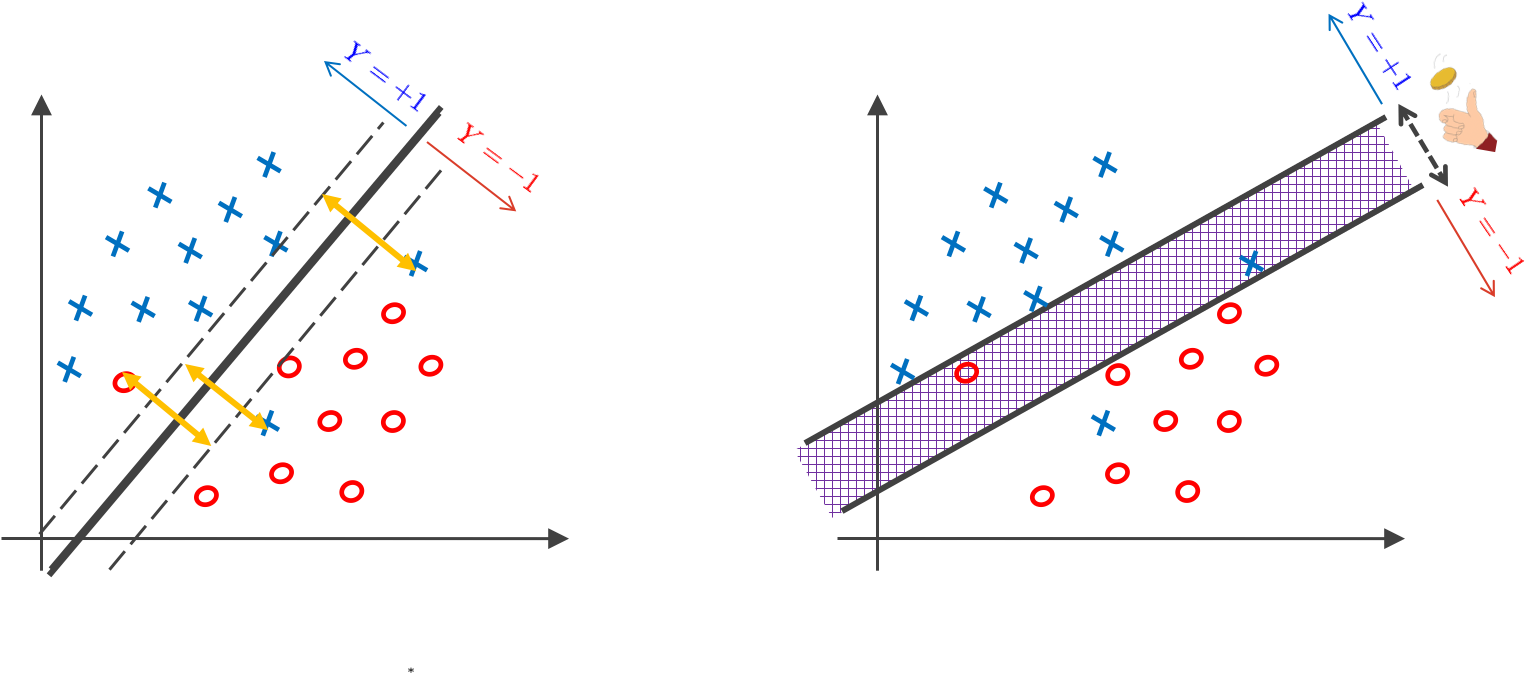}
  \caption{The determinstic linear prediction rule for SVM in the ERM approach (left picture) vs. the randomized linear prediction rule for MEM in the minimax approach (right picture)}    \label{Fig: Minimax hinge vs. Hinge}
\end{figure}
As seen in Figure \ref{Fig:DHL}, the minimax hinge loss is different from the hinge loss, and while the hinge loss is an adhoc surrogate loss function, the minimax hinge loss emerges naturally from the minimax framework. Another notable difference between the ERM and minimax frameworks is that while the linear prediction rule coming from the ERM framework is deterministic, the prediction rule resulted from the minimax approach is randomized linear (See Figure \ref{Fig: Minimax hinge vs. Hinge}). Indeed, the relaxation from the deterministic rules to the randomized rules is an important step to overcome the computational intractability of 0-1 loss minimization problem in the minimax approach. We will discuss the details of the randomized prediction rule for MEM later in Section 4. Therefore, 0-1 loss provides an important example where, unlike the ERM problem, the generalized maximum entropy framework developed at \cite{MaxEnt} results in a computationally tractable problem which is well-connected to the loss function.

\section{Principle of Maximum Conditional Entropy}
In this section, we provide a conditional version of the key definitions and results developed in \cite{MaxEnt}. We propose the principle of maximum conditional entropy to break Step 3 into 3a and 3b in Figure 1. We also define and characterize Bayes decision rules for different loss functions to address Step 3b.
\subsection{Decision Problems, Bayes Decision Rules, Conditional Entropy} \label{Subsection: CE,MI,Div}
Consider a decision problem. Here the decision maker observes $X\in \mathcal{X}$ from which she predicts a random target variable  $Y\in \mathcal{Y}$ using an action $a \in \A$. 
Let $P_{X,Y}=(P_X,P_{Y\vert X})$ be the underlying distribution for the random pair $(X,Y)$.   
Given a loss function $L: \mathcal{Y}\times\mathcal{A} \rightarrow [0,\infty]$, $L(y,a)$ indicates the loss suffered by the decision maker by deciding action $a$ when $Y=y$. 
The decision maker uses a decision rule $\psi: \mathcal{X} \rightarrow \mathcal{A}$ to select an action $a= \psi(x)$ from $\mathcal{A}$ based on an observation $x \in \X$. We  will in general allow the decision rules to be random, i.e. $\psi$ is random. The main purpose of extending to  the space of randomized decision rules is to form a convex set of decision rules. Later in Theorem \ref{MaxEntThm}, this convexity is used to prove a saddle-point theorem.

We call a (randomized) decision rule $\psi_{\text{\rm Bayes}}$ a Bayes decision rule if for all decision rules $\psi$ and for all $x \in \X$:
\begin{equation*}
 \E[L(Y, \psi_{\text{\rm Bayes}} (X)) | X = x] \le  \E[L(Y, \psi (X)) | X = x].
\end{equation*}   
It should be noted that $\psi_{\text{\rm Bayes}}$ depends only on $P_{Y\vert X}$, i.e. it remains a Bayes decision rule under a different $P_X$. The (unconditional) entropy of $Y$ is defined as \cite{MaxEnt}
\begin{equation}
H(Y) := \:\inf_{a \in \A} \: \E[L(Y, a)].
\end{equation}
Similarly, we can define conditional entropy of $Y$ given $X=x$ as
\begin{equation}
H(Y\vert X=x) :=\, \inf_{\psi}\; \E[L(Y, \psi (X)) | X = x],
\end{equation}
and the conditional entropy of $Y$ given $X$ as
\begin{equation}
H(Y\vert X) := \sum_x P_X(x) H(Y\vert X=x) =\, \inf_{\psi}\; \E[L(Y, \psi (X)) ].
\end{equation}

Note that $H(Y \vert X=x)$ and $H(Y\vert\, X) $ are both concave in $P_{Y\vert X}$. Applying Jensen's inequality, this concavity implies that
\begin{align*}
H(Y|X) \le H(Y),
\end{align*} 
which motivates the following definition for the information that $X$ carries about $Y$,
\begin{equation} \label{Def: Information}
I(X;Y) := H(Y) - H(Y\vert X) ,
\end{equation}
i.e. the reduction of expected loss in predicting $Y$ by observing $X$. In \cite{Dawid}, the author has defined the same concept to which he calls a coherent dependence measure. It can be seen that
\[I(X;Y)=\E_{P_X}[\, D(P_{Y|X},P_Y)\,]\] 
where $D$ is the divergence measure corresponding to the loss $L$, defined for any two probability distributions $P_Y,\, Q_Y$ with Bayes actions $a_P,\, a_Q$ as \cite{MaxEnt}
\begin{equation} \label{Eq: Divergence}
D(P_Y,Q_Y):= E_P [L(Y,a_Q)] - E_P [L(Y,a_P)] = E_P [L(Y,a_Q)] - H_P(Y).
\end{equation}  

\subsection{Examples}\label{subsec: examples}

\subsubsection{Logarithmic loss}\label{subsec: LogLoss}
For an outcome $y\in \mathcal{Y}$ and distribution $Q_{Y}$, define logarithmic loss as $L_{\log}(y,Q_{Y})=-\log Q_Y(y)$. 
It can be seen $H_{\log}(Y)$, $H_{\log}(Y \vert X)$, $I_{\log}(X; Y)$
are the well-known unconditional, conditional Shannon entropy and mutual information \cite{cover}. Also, the Bayes decision rule for a distribution $P_{X,Y}$ is given by $\psi_{\text{\rm Bayes}}(x) = P_{Y|X}( \cdot | x)$.
\subsubsection{0-1 loss}\label{subsec: 0-1Loss}
The 0-1 loss function is defined for any $y,\hat{y}\in \mathcal{Y}$ as $L_{\text{\rm 0-1}}(y,\hat{y})=\mathbf{1}(\hat{y} \neq y)$.
Then, we can show
\begin{equation*}
 H_{\text{\rm 0-1}}(Y) = 1- \max_{y \in \mathcal{Y}} P_{Y}(y), \quad H_{\text{\rm 0-1}}(Y\vert X) =1 -\sum_{x\in \mathcal{X}} \max_{y \in \mathcal{Y}} P_{X,Y}(x,y).
\end{equation*}
The Bayes decision rule for a distribution $P_{X,Y}$ is the well-known maximum a posteriori (MAP) rule, i.e. $\psi_{\text{\rm Bayes}}(x)= {\arg\!\max}_{y\in \mathcal{Y}}\: P_{Y\vert X}(y\vert x). $
\subsubsection{Quadratic loss}\label{subsec: QuadraticLoss}
The quadratic loss function is defined as $L_{2}(y,\hat{y})= (y- \hat{y} )^{2}$.
It can be seen 
\begin{equation*}
H_{2}(Y) = \text{Var}(Y),\quad H_{2}(Y\vert X) = \mathbb{E}\:[\text{Var}(Y\vert X)], \quad I_2(X ; Y) =  \text{Var}\left(\mathbb{E}[Y\vert X]\right).
\end{equation*}
The Bayes decision rule for any $P_{X,Y}$ is the well-known minimum mean-square error (MMSE) estimator that is $\psi_{\text{\rm Bayes}}(x)=\mathbb{E}[Y\vert X=x] $.

\subsection{Principle of Maximum Conditional Entropy \& Robust Bayes decision rules}
Given a distribution set $\Gamma$, consider the following minimax problem to find a decision rule minimizing the worst-case expected loss over $\Gamma$
\begin{equation} \label{GeneralMinimaxProblem}
\underset{\psi \in \boldsymbol{\Psi}}{\arg\!\min}\;\max_{P\in \Gamma}\, \E_P[L(Y, \psi (X))], 
\end{equation}
where $\boldsymbol{\Psi}$ is the space of all randomized mappings from $\mathcal{X}$ to $\mathcal{A}$ and $\E_P$ denotes the expected value over distribution $P$. 
We call any solution $\psi^{*}$ to the above problem a robust Bayes decision rule against $\Gamma$. The following results motivate a generalization of the maximum entropy principle to find a robust Bayes decision rule. Refer to the Appendix for the proofs.
\begin{subtheorem}{thm}\label{MaxEntThm}
\begin{thm}\label{thm:oneA}
(Weak Version) Suppose $\Gamma$ is convex and closed, and let $L$ be a bounded loss function. Assume $\mathcal{X},\mathcal{Y}$ are finite and that the risk set $S=\lbrace \, \left[ L(y,a)\right]_{y\in \mathcal{Y}} \, :\, a\in \mathcal{A}\,\rbrace $ is closed.  
Then there exists a robust Bayes decision rule $\psi^*$ against $\Gamma$, which is a Bayes decision rule for a distribution $P^*$ that maximizes the conditional entropy $H(Y|X)$ over $\Gamma$. 
\end{thm}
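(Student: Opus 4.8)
The plan is to prove the statement through a saddle-point argument: exchange the order of minimization and maximization in \eqref{GeneralMinimaxProblem} by Sion's minimax theorem \cite{sion}, and then read off the structure of the optimizer directly from the resulting saddle point. The hypotheses on $L$, $\Y$, and the risk set $S$ are exactly what is needed to put the problem into a form where Sion's theorem applies.

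First I would recast the space of randomized decision rules to expose its convex-analytic structure. For a fixed $x \in \X$, a randomized rule picks a probability measure over $\A$, and the induced vector of conditional expected losses $\bigl[\,\E[L(Y,a)\mid X=x]\,\bigr]_{y\in\Y}$ is a point of $\bar S := \mathrm{conv}(S)$. Since $L$ is bounded and $S$ is closed, $S$ is compact, so $\bar S$ is a compact convex subset of $\mathbb{R}^{|\Y|}$ (convex hull of a compact set in finite dimension, via Carath\'eodory). A randomized rule is thus equivalent to a choice of one point $s_x \in \bar S$ for each $x$, so the effective decision space is the compact convex set $\bar S^{\,|\X|}$. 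In these coordinates
\[
\E_P[L(Y,\psi(X))] \;=\; \sum_{x} P_{X}(x)\sum_{y} P_{Y|X}(y|x)\,[s_x]_y \;=\; \sum_{x,y} P(x,y)\,[s_x]_y
\]
is \emph{bilinear}: linear in $P$ for fixed $(s_x)_x$, and linear in $(s_x)_x$ for fixed $P$.

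Next I would verify the hypotheses of Sion's theorem. Because $\X$ and $\Y$ are finite, the set of all joint distributions is a simplex in $\mathbb{R}^{|\X||\Y|}$ and hence compact; as a closed subset of it, $\Gamma$ is compact, and it is convex by assumption. We are therefore minimizing over the compact convex set $\bar S^{\,|\X|}$ and maximizing over the compact convex set $\Gamma$ a function that is continuous and, by bilinearity, convex in the decision rule and concave in $P$. Sion's minimax theorem then gives
\[
\min_{\psi}\,\max_{P\in\Gamma}\,\E_P[L(Y,\psi(X))] \;=\; \max_{P\in\Gamma}\,\min_{\psi}\,\E_P[L(Y,\psi(X))] \;=\; \max_{P\in\Gamma}\, H(Y\vert X),
\]
where the last equality is the definition of conditional entropy, and where compactness ensures both the inner and outer extrema are attained, so a saddle point $(\psi^*,P^*)$ exists.

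Finally I would extract the claimed structure from the saddle-point inequalities $\E_P[L(Y,\psi^*(X))] \le \E_{P^*}[L(Y,\psi^*(X))] \le \E_{P^*}[L(Y,\psi(X))]$, which hold for every $P\in\Gamma$ and every $\psi$. The right inequality says $\psi^*$ minimizes the $P^*$-expected loss, i.e.\ $\psi^*$ is a Bayes decision rule for $P^*$; the left inequality, together with the value equality above, shows that $\max_{P\in\Gamma}\E_P[L(Y,\psi^*(X))]$ equals the minimax value, so $\psi^*$ is a robust Bayes rule against $\Gamma$; and since the common saddle value equals $\max_{P\in\Gamma}H(Y\vert X)$ (using the concavity of $H(Y\vert X)$ in $P_{Y|X}$ noted above), $P^*$ is a conditional-entropy maximizer over $\Gamma$. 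The main obstacle I anticipate is not this final bookkeeping but the setup that legitimizes invoking Sion's theorem: realizing the randomized decision rules as the compact convex set $\bar S^{\,|\X|}$, which is precisely where the hypotheses ``$\Y$ finite,'' ``$L$ bounded,'' and ``$S$ closed'' are consumed. Some care is also needed to confirm that the bilinear objective is genuinely jointly continuous and that the inner infimum defining $H(Y\vert X)$ is attained (again by compactness of $\bar S$), so that the saddle point is a bona fide optimizer on both sides rather than merely an equality of values.
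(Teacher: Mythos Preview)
Your proof is correct and follows essentially the same Sion-minimax approach as the paper. Your parametrization of randomized rules by the compact convex set $\bar S^{\,|\X|}$ is slightly cleaner than the paper's version, which invokes Sion only for the equality of values and then extracts $\psi^*$ separately as a subsequential limit of a minimizing sequence using compactness of the randomized risk set.
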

\begin{thm}\label{thm:oneB}
(Strong Version) Suppose $\Gamma$ is convex and that under any $P\in \Gamma$ there exists a Bayes decision rule. We also assume the continuity in Bayes decision rules for distributions in $\Gamma$ (See the Appendix for the exact condition). Then, if $P^{*}$ maximizes $H(Y|X)$ over $\Gamma$, any Bayes decision rule for $P^{*}$ is a robust Bayes decision rule against $\Gamma$. 
\end{thm}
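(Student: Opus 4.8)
The plan is to verify that a Bayes decision rule $\psi^*$ for the conditional-entropy maximizer $P^*$ attains the value of the minimax problem \eqref{GeneralMinimaxProblem}. The baseline is weak duality: exchanging the order of optimization can only decrease the value, so
\begin{equation*}
\min_{\psi}\,\max_{P\in\Gamma}\,\E_P[L(Y,\psi(X))]\;\ge\;\max_{P\in\Gamma}\,\min_{\psi}\,\E_P[L(Y,\psi(X))]\;=\;\max_{P\in\Gamma} H_P(Y\vert X)\;=\;H_{P^*}(Y\vert X),
\end{equation*}
where the second equality uses the definition $H_P(Y\vert X)=\inf_\psi \E_P[L(Y,\psi(X))]$ (an infimum attained under the hypothesis that every $P\in\Gamma$ admits a Bayes rule) and the last uses that $P^*$ maximizes $H(Y\vert X)$ over $\Gamma$. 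Since $\psi^*$ is a Bayes rule for $P^*$ we have $\E_{P^*}[L(Y,\psi^*(X))]=H_{P^*}(Y\vert X)$. Hence it suffices to establish the single key inequality
\begin{equation} \label{eq:keyB}
\E_P[L(Y,\psi^*(X))]\;\le\;H_{P^*}(Y\vert X)\qquad\text{for all } P\in\Gamma,
\end{equation}
because \eqref{eq:keyB} forces $\max_{P\in\Gamma}\E_P[L(Y,\psi^*(X))]=H_{P^*}(Y\vert X)$ (the reverse inequality is automatic upon taking $P=P^*$), and combined with weak duality this pins the minimax value at $H_{P^*}(Y\vert X)$ and exhibits $\psi^*$ as a minimizer.

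To prove \eqref{eq:keyB} I would exploit the convexity of $\Gamma$ together with the maximality of $P^*$ through a first-order perturbation. Fix $P\in\Gamma$ and set $P_\lambda:=(1-\lambda)P^*+\lambda P\in\Gamma$ for $\lambda\in(0,1]$, and let $\psi_\lambda$ be a Bayes rule for $P_\lambda$. Because $H(Y\vert X)$ is an infimum of the $P$-linear maps $P\mapsto \E_P[L(Y,\psi(X))]$ it is concave in $P$, and the maximality of $P^*$ gives $H_{P_\lambda}(Y\vert X)\le H_{P^*}(Y\vert X)$. On the other hand, writing $H_{P_\lambda}(Y\vert X)=\E_{P_\lambda}[L(Y,\psi_\lambda(X))]$ and splitting the expectation along the mixture yields
\begin{equation*}
H_{P_\lambda}(Y\vert X)=(1-\lambda)\,\E_{P^*}[L(Y,\psi_\lambda(X))]+\lambda\,\E_{P}[L(Y,\psi_\lambda(X))]\;\ge\;(1-\lambda)\,H_{P^*}(Y\vert X)+\lambda\,\E_{P}[L(Y,\psi_\lambda(X))],
\end{equation*}
where the inequality uses $\E_{P^*}[L(Y,\psi_\lambda(X))]\ge \inf_\psi \E_{P^*}[L(Y,\psi(X))]=H_{P^*}(Y\vert X)$. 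Combining the two bounds and subtracting $(1-\lambda)H_{P^*}(Y\vert X)$ leaves $\lambda\,H_{P^*}(Y\vert X)\ge \lambda\,\E_{P}[L(Y,\psi_\lambda(X))]$, i.e. $\E_{P}[L(Y,\psi_\lambda(X))]\le H_{P^*}(Y\vert X)$ for every $\lambda\in(0,1]$.

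It remains to let $\lambda\to 0^+$ and pass from $\psi_\lambda$ to $\psi^*$, and this is exactly where the continuity-of-Bayes-rules hypothesis enters: the condition should guarantee that the Bayes rules $\psi_\lambda$ of the perturbed distributions can be chosen so that $\E_{P}[L(Y,\psi_\lambda(X))]\to \E_P[L(Y,\psi^*(X))]$, which yields \eqref{eq:keyB}. I expect this limiting step to be the main obstacle. The delicate point is the possible non-uniqueness of Bayes rules for $P^*$: any two of them agree in $P^*$-expectation (both equal $H_{P^*}(Y\vert X)$) but may differ in $P$-expectation for $P\ne P^*$, so one must check that the limit of $\psi_\lambda$ matches the prescribed $\psi^*$ for which \eqref{eq:keyB} is asserted. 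Confirming that the appendix's exact continuity condition is strong enough to force this convergence for an arbitrary Bayes rule of $P^*$, together with a boundedness or domination argument justifying the interchange of limit and expectation, is the crux; the surrounding algebra is routine.
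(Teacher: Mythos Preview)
Your approach is essentially the same as the paper's: form the convex combination $P_\lambda=(1-\lambda)P^*+\lambda P$, use that $\psi_\lambda$ is Bayes for $P_\lambda$ together with $H_{P_\lambda}\le H_{P^*}$ to obtain $\E_P[L(Y,\psi_\lambda(X))]\le H_{P^*}(Y\vert X)$ (the paper writes the equivalent inequality $\E_P[L(Y,\psi_\lambda(X))]\le \E_{P^*}[L(Y,\psi_\lambda(X))]$), and then send $\lambda\to 0$ via the continuity hypothesis. Regarding your worry about non-uniqueness, the appendix states the continuity condition for an \emph{arbitrary} Bayes rule $\psi$ of the limit distribution, so it applies to whichever $\psi^*$ you fix; no additional selection argument is needed.
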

\end{subtheorem}
\textbf{Principle of Maximum Conditional Entropy}: Given a set of distributions $\Gamma$, predict $Y$ based on a distribution in $\Gamma$ that maximizes the conditional entropy of $Y$ given $X$, i.e. 
\begin{equation}  \label{MaxEntProblem}
\underset{P\in \Gamma}{\arg\!\max}\; H(Y\vert X)
\end{equation} 

Note that while the weak version of Theorem \ref{MaxEntThm} guarantees \textbf{only the existence} of a saddle point for \eqref{GeneralMinimaxProblem}, the strong version further guarantees that \textbf{any} Bayes decision rule of the maximizing distribution results in a robust Bayes decision rule. However, the continuity in Bayes decision rules does not hold for the discontinuous 0-1 loss, which requires considering the weak version of Theorem \ref{MaxEntThm} to address this issue. 

\section{Prediction via Maximum Conditional Entropy Principle}
Consider a prediction task with target variable $Y$ and feature vector $\mathbf{X}=(X_1,\ldots ,X_d)$. We do not require the variables to be discrete. As discussed earlier, the maximum conditional entropy principle reduces \eqref{GeneralMinimaxProblem} to \eqref{MaxEntProblem}, which formulate steps 3 and 3a in Figure \ref{Fig: MCE}, respectively.
However, a general formulation of \eqref{MaxEntProblem} in terms of the joint distribution $P_{\mathbf{X},Y}$ leads to an exponential computational complexity in the feature dimension $d$. 

The key question is therefore under what structures of $\Gamma(\hat{P})$ in Step 2 we can solve \eqref{MaxEntProblem} efficiently. In this section, we propose a specific structure for $\Gamma(\hat{P})$, under which we provide an efficient solution to Steps 3a and 3b in Figure 1. In addition, we prove a bound on the generalization worst-case risk for the proposed $\Gamma(\hat{P})$. In fact, we derive these results by reducing 
\eqref{MaxEntProblem} to the maximum likelihood problem over a generalized linear model, under this specific structure.

To describe this structure, consider a set of distributions $\Gamma(Q)$ centered around a given distribution $Q_{\mathbf{X},Y}$, where for a given norm $\Vert\cdot\Vert$, mapping vector $\boldsymbol{\theta}(Y)_{t\times 1}$,
\begin{align}
\Gamma(Q)=\lbrace \: P_{\mathbf{X},Y}:& \: P_\mathbf{X}=Q_\mathbf{X} \, , \label{Eq: Classform} \\
&\: \forall\, 1\le i\le t:\;\;  \Vert\, \mathbb{E}_{P}\left[{\theta}_i(Y)\mathbf{X}\right] - \mathbb{E}_{Q}\left[ {\theta}_i(Y)\mathbf{X}\right]\Vert \le \epsilon_i \:\rbrace . \nonumber
\end{align}
Here $\boldsymbol{\theta}$ encodes $Y$ with $t$-dimensional $\boldsymbol{\theta}(Y)$, and ${\theta}_i(Y)$ denotes the $i$th entry of $\boldsymbol{\theta}(Y)$.  The first constraint in the definition of $\Gamma(Q)$ requires all distributions in $\Gamma(Q)$ to share the same marginal on $\mathbf{X}$ as $Q$; the second  imposes constraints on the cross-moments between $\mathbf{X}$ and $Y$, allowing for some uncertainty in estimation. When applied to the supervised learning problem, we will choose $Q$ to be the empirical distribution $\hat{P}$ and select $\boldsymbol{\theta}$ appropriately based on the loss function $L$. However, for now we will consider the problem of solving \eqref{MaxEntProblem} over $\Gamma (Q)$ for general $Q$ and $\boldsymbol{\theta}$.

To that end, we use a similar technique as in the Fenchel's duality theorem, also used at \cite{altun2006,dudik,semi2010} to address divergence minimization problems. However, we consider a different version of convex conjugate for $-H$, which is defined with respect to $\boldsymbol{\theta}$. Considering $\mathcal{P_Y}$ as the set of all probability distributions for the variable $Y$, we define $F_{\boldsymbol{\theta}}:\, \mathbb{R}^t \rightarrow \mathbb{R} $ as the convex conjugate of $-H(Y)$ with respect to the mapping $\boldsymbol{\theta}$,
\begin{equation}\label{F: definition}
F_{\boldsymbol{\theta}}(\mathbf{z})\, := \, \max_{P\in \mathcal{P_Y}}\,  H(Y) + \mathbb{E}[\boldsymbol{\theta}(Y)]^T\mathbf{z}. 
\end{equation}  
\begin{thm} \label{Thm: duality}
Define $\Gamma(Q)$, $F_{\boldsymbol{\theta}}$ as given by \eqref{Eq: Classform}, \eqref{F: definition}. Then the following duality holds
\begin{equation} \label{Eq: duality}
\max_{P\in\Gamma(Q)} H(Y\vert \mathbf{X}) \: =\min_{\mathbf{A \in \mathbb{R}^{t\times d}}}\: \mathbb{E}_{Q} \left[\, F_{\boldsymbol{\theta}} (\mathbf{A}\mathbf{X}) - \boldsymbol{\theta}(Y)^T\mathbf{A}\mathbf{X}\, \right] + \sum_{i=1}^t {\epsilon_i} \Vert \mathbf{A}_i \Vert_{*},
\end{equation}
where $\Vert \mathbf{A}_i \Vert_{*}$ denotes $\Vert\cdot \Vert$'s dual norm of the $\mathbf{A}$'s $i$th row. Furthermore, for the optimal $P^*$ and $\mathbf{A}^*$
\begin{equation} \label{Eq: duality_KKT}
\mathbb{E}_{P^*} [\,\boldsymbol{\theta}(Y) \,\vert\, \mathbf{X}=\mathbf{x}\,] = \nabla  F_{\boldsymbol{\theta}} \, (\mathbf{A}^*\mathbf{x}).
\end{equation}
\end{thm}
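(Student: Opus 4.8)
The plan is to read the left-hand side as a concave maximization over the conditionals $P_{Y\vert\mathbf{X}=\mathbf{x}}$ and to produce the right-hand side as its Lagrangian/Fenchel dual, with the matrix $\mathbf{A}$ playing the role of the multipliers attached to the cross-moment constraints. First I would note that, since every $P\in\Gamma(Q)$ shares the marginal $Q_\mathbf{X}$, the objective factorizes as $H(Y\vert\mathbf{X})=\sum_{\mathbf{x}}Q_\mathbf{X}(\mathbf{x})\,H(Y\vert\mathbf{X}=\mathbf{x})$, so the conditionals $\{P_{Y\vert\mathbf{X}=\mathbf{x}}\}_{\mathbf{x}}$ are free to be chosen independently and the only coupling between them is through the $t$ vector constraints $\Vert\mathbb{E}_P[\theta_i(Y)\mathbf{X}]-\mathbb{E}_Q[\theta_i(Y)\mathbf{X}]\Vert\le\epsilon_i$. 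Because $H(Y\vert\mathbf{X})$ is concave in $P_{Y\vert\mathbf{X}}$ (Section~3.1) and the feasible set is convex, this is a convex program amenable to conjugate duality.

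Next I would dualize the norm constraints. The key identity is that the indicator of the ball $\{\mathbf{v}:\Vert\mathbf{v}\Vert\le\epsilon_i\}$ equals the conjugate of the scaled dual norm, i.e. $\sup_{\mathbf{A}_i}(\mathbf{A}_i^{T}\mathbf{v}-\epsilon_i\Vert\mathbf{A}_i\Vert_{*})$, which follows from $\mathbf{A}_i^{T}\mathbf{v}\le\Vert\mathbf{A}_i\Vert_*\Vert\mathbf{v}\Vert$ and tightness of H\"older's inequality. Subtracting these indicators from the objective, and absorbing the sign via the symmetry $\mathbf{A}_i\mapsto-\mathbf{A}_i$ of the norm, I introduce one multiplier row $\mathbf{A}_i$ per constraint, assemble them into $\mathbf{A}\in\mathbb{R}^{t\times d}$, and rewrite the primal as $\max_{P}\min_{\mathbf{A}}\big[H(Y\vert\mathbf{X})+\sum_i\mathbf{A}_i^{T}(\mathbb{E}_P[\theta_i(Y)\mathbf{X}]-\mathbb{E}_Q[\theta_i(Y)\mathbf{X}])+\sum_i\epsilon_i\Vert\mathbf{A}_i\Vert_*\big]$. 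The crucial algebraic simplification is to collapse the bilinear coupling into $\sum_i\mathbf{A}_i^{T}\mathbb{E}_P[\theta_i(Y)\mathbf{X}]=\mathbb{E}_P[\boldsymbol{\theta}(Y)^{T}\mathbf{A}\mathbf{X}]$, and likewise for $Q$.

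I would then exchange $\min$ and $\max$. The inner objective is concave in $P_{Y\vert\mathbf{X}}$ and convex (linear plus a norm) in $\mathbf{A}$, so Sion's minimax theorem---equivalently the Fenchel-duality argument referenced in the text---licenses the swap once compactness of the conditional simplices (finite $\mathcal{Y}$) is used. After swapping, for fixed $\mathbf{A}$ the maximization over $P_{Y\vert\mathbf{X}}$ again decouples across $\mathbf{x}$, and each per-$\mathbf{x}$ piece $\max_{P_{Y}}\big(H(Y)+\mathbb{E}[\boldsymbol{\theta}(Y)]^{T}\mathbf{A}\mathbf{x}\big)$ is by definition exactly $F_{\boldsymbol{\theta}}(\mathbf{A}\mathbf{x})$. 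Averaging over $Q_\mathbf{X}$ and reinstating the term $-\mathbb{E}_Q[\boldsymbol{\theta}(Y)^{T}\mathbf{A}\mathbf{X}]$ produces precisely the claimed dual $\mathbb{E}_Q[F_{\boldsymbol{\theta}}(\mathbf{A}\mathbf{X})-\boldsymbol{\theta}(Y)^{T}\mathbf{A}\mathbf{X}]+\sum_i\epsilon_i\Vert\mathbf{A}_i\Vert_*$.

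Finally, for the optimality condition \eqref{Eq: duality_KKT} I would invoke the envelope (Danskin) theorem applied to the defining maximization of $F_{\boldsymbol{\theta}}$: since $F_{\boldsymbol{\theta}}(\mathbf{z})$ is a supremum of functions affine in $\mathbf{z}$, it is convex with $\nabla F_{\boldsymbol{\theta}}(\mathbf{z})=\mathbb{E}_{P_Y^*(\mathbf{z})}[\boldsymbol{\theta}(Y)]$, where $P_Y^*(\mathbf{z})$ attains the max. Identifying the maximizer at $\mathbf{z}=\mathbf{A}^*\mathbf{x}$ with the primal-optimal conditional $P^*_{Y\vert\mathbf{X}=\mathbf{x}}$ yields $\mathbb{E}_{P^*}[\boldsymbol{\theta}(Y)\vert\mathbf{X}=\mathbf{x}]=\nabla F_{\boldsymbol{\theta}}(\mathbf{A}^*\mathbf{x})$. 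I expect the main obstacle to be the rigorous justification of the min--max exchange and of strong duality---verifying Sion's hypotheses, attainment of the primal maximizer, and a constraint qualification guaranteeing no duality gap---together with the differentiability of $F_{\boldsymbol{\theta}}$ needed to promote the subgradient optimality condition to the clean gradient identity; these are exactly the places where the finiteness and closedness assumptions and the convex-conjugate structure of $-H$ must be used with care.
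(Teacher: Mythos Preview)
Your proposal is correct and follows essentially the same route as the paper: dualize the norm-ball cross-moment constraints with multiplier rows $\mathbf{A}_i$, decouple the inner maximization over $P_{Y\vert\mathbf{X}=\mathbf{x}}$ to recover $F_{\boldsymbol{\theta}}(\mathbf{A}\mathbf{x})$, and read off the gradient identity from the conjugate/envelope relation. The only cosmetic differences are that the paper introduces auxiliary variables $\mathbf{w}_i=\mathbb{E}_P[\theta_i(Y)\mathbf{X}]$ and invokes Slater's condition for strong duality rather than Sion's theorem, and it phrases \eqref{Eq: duality_KKT} via the derivative property of convex conjugates rather than Danskin---both pairs of tools being interchangeable here.
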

\begin{proof}
Refer to the the supplementary material for the proof.
\end{proof}
When applying Theorem \ref{Thm: duality} on a supervised learning problem with a specific loss function, $\theta$ will be chosen such that $\mathbb{E}_{P^*} [\,\boldsymbol{\theta}(Y) \,\vert\, \mathbf{X}=\mathbf{x}\,] $ provides sufficient information to compute the Bayes decision rule $\Psi^*$ for $P^*$. 
This enables the direct computation of $\psi^*$, i.e. step 3 of Figure \ref{Fig: MCE}, without the need to explicitly compute $P^*$ itself.  For the loss functions discussed at Subsection \ref{subsec: examples}, we choose the identity $\boldsymbol{\theta}(Y)=Y$ for the quadratic loss and the one-hot encoding $\boldsymbol{\theta}(Y)=[\,\mathbf{1}(Y=i)\,]_{i=1}^{t}$ for the logarithmic and 0-1 loss functions. Later in this section, we will discuss how this theorem applies to these loss functions.

We make the key observation that the problem in the RHS of \eqref{Eq: duality}, when $\epsilon_i=0$ for all $i$'s, is equivalent to minimizing the negative log-likelihood for fitting a generalized linear model \cite{GLM} given by
\begin{itemize}[leftmargin=*]
\item An exponential-family distribution $p(y|\boldsymbol{\eta})=h(y)\exp\left( \boldsymbol{\eta}^T \boldsymbol{\theta}(y) - F_{\boldsymbol{\theta}}(\boldsymbol{\eta})\right)$ with the log-partition function $F_{\boldsymbol{\theta}}$ and the sufficient statistic $\boldsymbol{\theta}(Y)$,
\item A linear predictor, $\boldsymbol{\eta}(\mathbf{X})=\mathbf{A}\mathbf{X}$,
\item A mean function, $\mathbb{E}[\,\boldsymbol{\theta}(Y) \vert \mathbf{X}=\mathbf{x}] = \nabla  F_{\boldsymbol{\theta}}  (\boldsymbol{\eta}(\mathbf{x}))$.
\end{itemize} 
\begin{figure}[t]
  \centering 
  \includegraphics[width=0.4\textwidth]{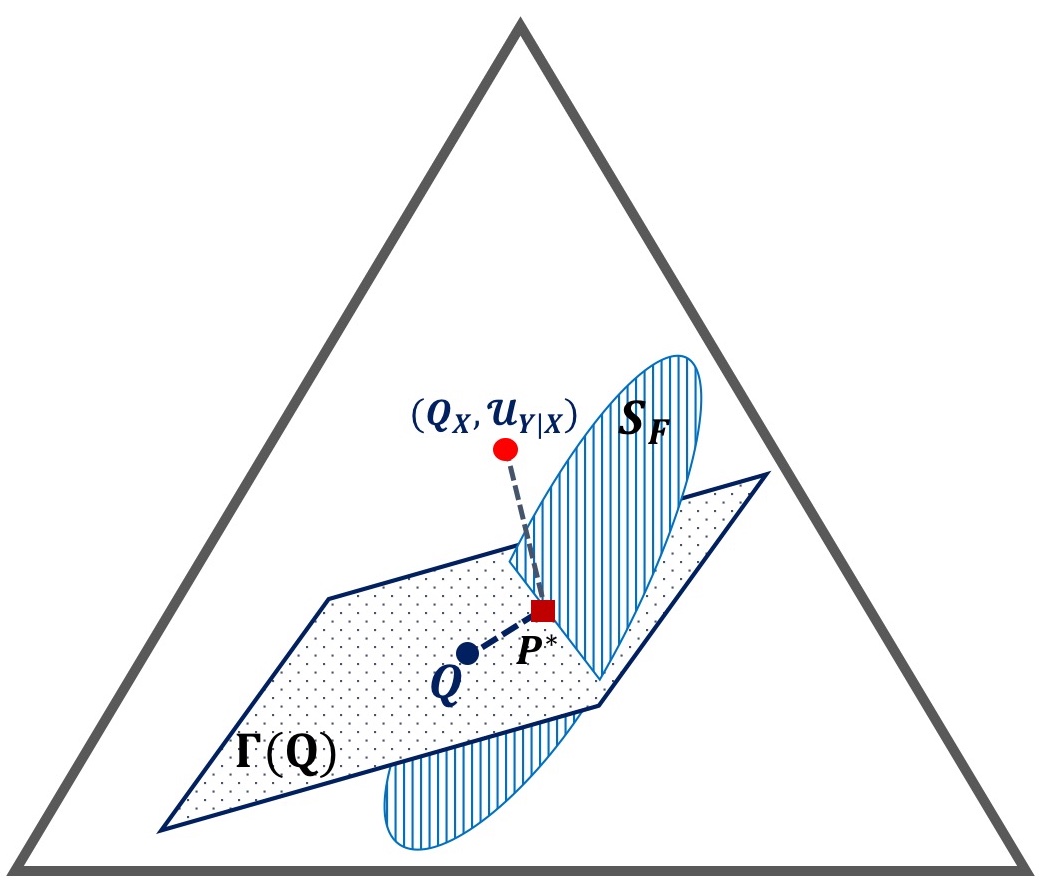}
  \caption{Duality of Maximum Conditional Entropy/Maximum Likelihood in GLMs}    \label{Fig: MaxEnt duality}
\end{figure}
Therefore, Theorem \ref{Thm: duality} reveals a duality between the maximum conditional entropy problem over $\Gamma(Q)$ and the regularized maximum likelihood problem for the specified generalized linear model. This duality further provides a minimax justification for generalized linear models and fitting them using maximum likelihood, since we can consider the convex conjugate of its log-partition function as the negative entropy in the maximum conditional entropy problem.
\subsection{Generalization Bound on the Worst-case Risk}
By establishing the objective's Lipschitzness and boundedness through appropriate assumptions, we can apply standard results to bound the rate of uniform convergence for the problem in the RHS of \eqref{Eq: duality}. Here we consider the uniform convergence of the empirical averages, when $Q=\hat{P}_n$ is the empirical distribution of $n$ samples drawn i.i.d. from the underlying distribution $\tilde{P}$, to their expectations when $Q=\tilde{P}$.   

In the supplementary material, we prove the following theorem which bounds the generalization worst-case risk, by interpreting the mentioned uniform convergence on the other side of the duality. Here ${{\hat{\psi}}_{n}}$ and $\tilde{\psi}$ denote the robust Bayes decision rules against $\Gamma(\hat{P}_n)$ and $\Gamma(\tilde{P})$, respectively. As explained earlier, by the maximum conditional entropy principle we can learn ${{\hat{\psi}}_{n}}$ by solving the RHS of \eqref{Eq: duality} for the empirical distribution and then applying \eqref{Eq: duality_KKT}.
\begin{thm} \label{Thm: Learnability 1}
Consider a loss function $L$ with the entropy function $H$ and suppose $\boldsymbol{\theta}(Y)$ includes only one element, i.e. $t=1$. Let $M=\max_{P\in\mathcal{P_Y}}\, H(Y)$ be the maximum entropy value over $\mathcal{P_Y}$. Also, take $\Vert\cdot \Vert/ \Vert\cdot \Vert_*$ to be the $\ell_p/\ell_q$ pair where $\frac{1}{p}+\frac{1}{q}=1$, $1 \le q\le 2$. 
Given that $\Vert \mathbf{X}\Vert_2 \le B$ and $\vert {\theta}(Y)\vert \le L$, for any $\delta>0$ with probability at least $1-\delta$
\begin{equation}
\max_{P\in \Gamma(\tilde{P})}\E[ L(Y,\hat{\psi}_n(\mathbf{X}))] \, -\, \max_{P\in \Gamma(\tilde{P})} \E [ L(Y,\tilde{\psi}(\mathbf{X}))] \: \le \: \frac{8BLM}{\epsilon\sqrt{n}}\,\biggl( 1+ \sqrt{\frac{\log(2/\delta)}{2}} \biggr). 
\end{equation}
\end{thm}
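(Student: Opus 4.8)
The plan is to move everything to the dual (minimization) side of Theorem~\ref{Thm: duality} and prove the statement as a uniform-convergence result there, then transport the conclusion back through the duality. With $t=1$ write $\mathbf{a}\in\mathbb{R}^d$ for the (now vector-valued) dual variable and set $G_Q(\mathbf{a}):=\E_Q[F_{\boldsymbol\theta}(\mathbf{a}^T\mathbf{X})-\theta(Y)\,\mathbf{a}^T\mathbf{X}]+\epsilon\Vert\mathbf{a}\Vert_*$, so that \eqref{Eq: duality} reads $\max_{P\in\Gamma(Q)}H(Y\vert\mathbf{X})=\min_{\mathbf a}G_Q(\mathbf a)$. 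Let $\hat{\mathbf a}_n$ and $\tilde{\mathbf a}$ be the minimizers of $G_{\hat P_n}$ and $G_{\tilde P}$, and let $\hat\psi_n,\tilde\psi$ be the induced Bayes rules via \eqref{Eq: duality_KKT}. Applying Theorem~\ref{Thm: duality} with $Q=\tilde P$ together with the saddle-point identity of Theorem~\ref{MaxEntThm} gives $\max_{P\in\Gamma(\tilde P)}\E[L(Y,\tilde\psi(\mathbf X))]=G_{\tilde P}(\tilde{\mathbf a})$. The first step---and the one I expect to be the crux---is the transfer inequality $\max_{P\in\Gamma(\tilde P)}\E[L(Y,\hat\psi_n(\mathbf X))]\le G_{\tilde P}(\hat{\mathbf a}_n)$: for an arbitrary dual vector $\mathbf a$, the weak-duality direction in the proof of Theorem~\ref{Thm: duality} shows that $G_{\tilde P}(\mathbf a)$ upper-bounds the worst-case risk over $\Gamma(\tilde P)$ of the decision rule it induces through the mean relation \eqref{Eq: duality_KKT}. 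Subtracting the two displays reduces the theorem's left-hand side to the excess dual objective $G_{\tilde P}(\hat{\mathbf a}_n)-G_{\tilde P}(\tilde{\mathbf a})$.

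Next I would carry out the standard excess-risk decomposition
\[
G_{\tilde P}(\hat{\mathbf a}_n)-G_{\tilde P}(\tilde{\mathbf a}) = \bigl[G_{\tilde P}(\hat{\mathbf a}_n)-G_{\hat P_n}(\hat{\mathbf a}_n)\bigr] + \bigl[G_{\hat P_n}(\hat{\mathbf a}_n)-G_{\hat P_n}(\tilde{\mathbf a})\bigr] + \bigl[G_{\hat P_n}(\tilde{\mathbf a})-G_{\tilde P}(\tilde{\mathbf a})\bigr],
\]
where the middle bracket is $\le 0$ by optimality of $\hat{\mathbf a}_n$ for $G_{\hat P_n}$, and the regularizer $\epsilon\Vert\mathbf a\Vert_*$ cancels in the other two brackets because it does not depend on the distribution. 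Hence the excess is at most $2\sup_{\mathbf a}\bigl|(\E_{\tilde P}-\E_{\hat P_n})[F_{\boldsymbol\theta}(\mathbf a^T\mathbf X)-\theta(Y)\mathbf a^T\mathbf X]\bigr|$. To make this supremum finite I would first confine $\mathbf a$ to a ball: evaluating $G_{\hat P_n}$ at $\mathbf a=0$ gives $G_{\hat P_n}(0)=F_{\boldsymbol\theta}(0)=M$, while $F_{\boldsymbol\theta}(z)\ge\theta(y)z$ for every $y,z$ (take the point mass at $y$ in \eqref{F: definition} and use that the entropy of a point mass is nonnegative) forces the expectation term in $G_{\hat P_n}(\hat{\mathbf a}_n)$ to be nonnegative; comparing with $G_{\hat P_n}(\hat{\mathbf a}_n)\le G_{\hat P_n}(0)=M$ yields $\Vert\hat{\mathbf a}_n\Vert_*\le M/\epsilon$, and likewise $\Vert\tilde{\mathbf a}\Vert_*\le M/\epsilon$.

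It then remains to control $\sup_{\Vert\mathbf a\Vert_*\le M/\epsilon}(\E_{\tilde P}-\E_{\hat P_n})[\,\cdot\,]$ by standard symmetrization. Differentiating \eqref{F: definition} shows $\nabla F_{\boldsymbol\theta}(z)=\E_{P^*}[\theta(Y)]$, so $|\theta|\le L$ makes $F_{\boldsymbol\theta}$ an $L$-Lipschitz scalar function; the Ledoux--Talagrand contraction principle then bounds the Rademacher complexity of $\{F_{\boldsymbol\theta}(\mathbf a^T\mathbf X)\}$ by $L$ times that of the linear class, and the term $\theta(Y)\mathbf a^T\mathbf X$ is handled directly using $|\theta|\le L$. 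Because $\Vert\mathbf a\Vert_*=\Vert\mathbf a\Vert_q\ge\Vert\mathbf a\Vert_2$ for $q\le2$, the radius bound gives $\Vert\mathbf a\Vert_2\le M/\epsilon$, and the linear-class Rademacher complexity under $\Vert\mathbf X\Vert_2\le B$ is at most $\tfrac{(M/\epsilon)B}{\sqrt n}$; the two contributions combine to a Rademacher complexity of order $\tfrac{BLM}{\epsilon\sqrt n}$. Finally, the functions $F_{\boldsymbol\theta}(\mathbf a^T\mathbf X)-\theta(Y)\mathbf a^T\mathbf X$ are bounded (again by $M$, $L$, and $B/\epsilon$ via H\"older), so McDiarmid's bounded-difference inequality upgrades the in-expectation bound to the high-probability statement; tracking the factor of $2$ from the excess-risk step, the factor from symmetrization, and the bounded-difference constant collapses the bound to the stated $\tfrac{8BLM}{\epsilon\sqrt n}\bigl(1+\sqrt{\log(2/\delta)/2}\bigr)$ after collecting constants.

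The main obstacle is the transfer inequality of the first paragraph: it is precisely what ``interprets the uniform convergence on the other side of the duality,'' and it must be justified from the weak-duality half of Theorem~\ref{Thm: duality} rather than from strong duality, since for discontinuous losses such as $0$-$1$ the induced $\hat\psi_n$ need only be the \emph{weak}-version robust Bayes rule of Theorem~\ref{MaxEntThm}. Everything after that---the decomposition, the norm bound $M/\epsilon$, and the Rademacher/McDiarmid estimates---is routine, with the hypothesis $1\le q\le2$ entering only to pass from the dual norm $\ell_q$ to the $\ell_2$ geometry in which the linear-class complexity is naturally bounded.
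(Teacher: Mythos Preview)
Your proposal is correct and matches the paper's proof essentially step for step: the same transfer inequality (which the paper proves by re-running the Lagrangian duality of Theorem~\ref{Thm: duality} with the decision rule held fixed at $\hat\psi_n$, plugging in $\mathbf A=\hat{\mathbf a}_n$, and then invoking the saddle-point identity $F_{\boldsymbol\theta}(\hat{\mathbf a}_n^{T}\mathbf x)=\sup_{P}\E[L(Y,\hat\psi_n(\mathbf x))]+\E[\theta(Y)]\,\hat{\mathbf a}_n^{T}\mathbf x$ that holds precisely because $\hat\psi_n$ is the weak-version robust Bayes rule), the same three-term decomposition, the same $\Vert\mathbf a\Vert_*\le M/\epsilon$ bound obtained by comparing with $\mathbf a=0$, and the same Lipschitz/Rademacher/bounded-difference machinery. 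The only cosmetic differences are that the paper handles the third bracket by Hoeffding alone (since $\tilde{\mathbf a}$ is data-independent) rather than by a two-sided uniform bound, and it invokes the $\ell_p/\ell_q$ Rademacher estimate directly (picking up a $\sqrt{p-1}$ factor) instead of first embedding the $\ell_q$-ball into $\ell_2$.
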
  
Theorem \ref{Thm: Learnability 1} states that though we learn the prediction rule $\hat{\psi}_n$ by solving the maximum conditional problem for the empirical case, we can bound the excess $\Gamma$-based worst-case risk. This generalization result justifies the constraint of fixing the marginal $P_\mathbf{X}$ across the proposed $\Gamma(Q)$ and explains the role of the uncertainty parameter $\epsilon$ in bounding the generalization worst-case risk. 
\subsection{Geometric Interpretation of Theorem \ref{Thm: duality}}
By solving the regularized maximum likelihood problem in the RHS of \eqref{Eq: duality}, we in fact minimize a regularized KL-divergence 
\begin{equation} \label{Eq: Duality 2}
\underset{P_{Y|\mathbf{X}}\in S_{F}}{\arg\!\min}\;\;\mathbb{E}_{Q_\mathbf{X}}[\,D_{\text{\rm KL}}(\, Q_{Y|\mathbf{X}}\, || \, P_{Y|\mathbf{X}}\, )\,] + \sum_{i=1}^t {\epsilon_i} \Vert \mathbf{A}_i(P_{Y|\mathbf{X}}) \Vert_{*},
\end{equation}
where $S_{F}=\lbrace P_{Y|\mathbf{X}}(y\vert\mathbf{x})=h(y)\exp(\, \boldsymbol{\theta}(y)^T\mathbf{A}\mathbf{x}  - F_{\boldsymbol{\theta}}(\mathbf{A}\mathbf{x})\,  | \, \mathbf{A}\in \mathbb{R}^{t\times s} \rbrace$ is the set of all exponential-family conditional distributions for the specified generalized linear model. This can be viewed as projecting $Q_{Y|X}$ onto $S_{F}$ (See Figure \ref{Fig: MaxEnt duality}). 

Furthermore, it can be seen that for a label-invariant entropy function $H(Y)$, the Bayes act for the uniform distribution $\mathcal{U}_{Y}$ leads to the same expected loss under any distribution $P_Y$ on $Y$. Based on the divergence $D$'s definition in \eqref{Eq: Divergence}, maximizing $H(Y|\mathbf{X})$ over $\Gamma(Q)$ in the LHS of \eqref{Eq: duality} is therefore equivalent to the following divergence minimization problem
\begin{equation} \label{Eq: Duality 1}
\underset{P_{Y|\mathbf{X}}:\: (Q_\mathbf{X},P_{Y|\mathbf{X}})\in \Gamma(Q)}{\arg\!\min}\;\;\mathbb{E}_{Q_\mathbf{X}}[\,D(P_{Y|\mathbf{X}},\mathcal{U}_{Y|\mathbf{X}})\,].
\end{equation} 
Here $\mathcal{U}_{Y|\mathbf{X}}$ denotes the uniform conditional distribution over $Y$ given any $x\in\mathcal{X}$. This can be interpreted as projecting the joint distribution $(Q_\mathbf{X},\mathcal{U}_{Y|\mathbf{X}})$ onto $\Gamma(Q)$ (See Figure \ref{Fig: MaxEnt duality}). Then, the duality shown in Theorem \ref{Thm: duality} implies the following corollary.
\begin{cor}
The solution to \eqref{Eq: Duality 2} would also minimize \eqref{Eq: Duality 1}, i.e. \eqref{Eq: Duality 2} $\subseteq$ \eqref{Eq: Duality 1}.
\end{cor}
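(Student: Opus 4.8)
The plan is to connect both optimization problems to the two sides of the duality in Theorem~\ref{Thm: duality} and then read off the primal–dual relationship \eqref{Eq: duality_KKT}. First I would show that \eqref{Eq: Duality 2} is, up to an additive constant, the same minimization as the right-hand side of \eqref{Eq: duality}. Writing a member of $S_F$ as $P_{Y|\mathbf{X}}(y|\mathbf{x})=h(y)\exp(\boldsymbol{\theta}(y)^T\mathbf{A}\mathbf{x}-F_{\boldsymbol{\theta}}(\mathbf{A}\mathbf{x}))$ and expanding the conditional KL-divergence with the tower property gives
\[
\mathbb{E}_{Q_\mathbf{X}}\bigl[D_{\text{\rm KL}}(Q_{Y|\mathbf{X}}\,\Vert\,P_{Y|\mathbf{X}})\bigr]=c_Q+\mathbb{E}_{Q}\bigl[F_{\boldsymbol{\theta}}(\mathbf{A}\mathbf{X})-\boldsymbol{\theta}(Y)^T\mathbf{A}\mathbf{X}\bigr],
\]
where $c_Q=\mathbb{E}_{Q}\bigl[\log(Q_{Y|\mathbf{X}}(Y)/h(Y))\bigr]$ is independent of $\mathbf{A}$. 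Adding the regularizer $\sum_i\epsilon_i\Vert\mathbf{A}_i\Vert_*$, the minimizer $\mathbf{A}^*$ of \eqref{Eq: Duality 2} coincides with the minimizer of the right-hand side of \eqref{Eq: duality}, so the solution of \eqref{Eq: Duality 2} is the exponential-family conditional $P^{\mathbf{A}^*}_{Y|\mathbf{X}}\in S_F$.

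Next I would identify the left-hand side of \eqref{Eq: duality} with \eqref{Eq: Duality 1}. As already observed, for a label-invariant entropy the Bayes act of $\mathcal{U}_Y$ yields a constant expected loss, so by the definition \eqref{Eq: Divergence} of $D$ one has $\mathbb{E}_{Q_\mathbf{X}}[D(P_{Y|\mathbf{X}},\mathcal{U}_{Y|\mathbf{X}})]=c'-H(Y|\mathbf{X})$ with $c'$ independent of $P_{Y|\mathbf{X}}$; hence minimizing \eqref{Eq: Duality 1} is exactly maximizing $H(Y|\mathbf{X})$ over $\Gamma(Q)$. Let $P^*$ be the maximizer, i.e.\ the solution of \eqref{Eq: Duality 1}. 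The core step is then to argue $P^{\mathbf{A}^*}=P^*$. I would obtain this from the Lagrangian structure behind Theorem~\ref{Thm: duality}: for a fixed dual matrix $\mathbf{A}$, the inner maximization over $P_{Y|\mathbf{X}=\mathbf{x}}$ of the Lagrangian for the cross-moment constraints in \eqref{Eq: Classform} is precisely the maximization defining $F_{\boldsymbol{\theta}}(\mathbf{A}\mathbf{x})$ in \eqref{F: definition}, whose maximizer is the member of $S_F$ with parameter $\mathbf{A}$. Evaluating at the optimal dual $\mathbf{A}^*$ and invoking the value equality in \eqref{Eq: duality} together with complementary slackness shows that $P^{\mathbf{A}^*}_{Y|\mathbf{X}}$ is both feasible for $\Gamma(Q)$ and attains the maximum conditional entropy, so $P^{\mathbf{A}^*}=P^*$.

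Combining the two identifications, the minimizer $P^{\mathbf{A}^*}_{Y|\mathbf{X}}$ of \eqref{Eq: Duality 2} equals the maximum-conditional-entropy distribution $P^*$, which minimizes \eqref{Eq: Duality 1}; this is precisely the claimed inclusion \eqref{Eq: Duality 2}~$\subseteq$~\eqref{Eq: Duality 1}. As a consistency check, the relation \eqref{Eq: duality_KKT} gives $\mathbb{E}_{P^*}[\boldsymbol{\theta}(Y)|\mathbf{X}=\mathbf{x}]=\nabla F_{\boldsymbol{\theta}}(\mathbf{A}^*\mathbf{x})$, which matches the mean function of the exponential-family conditional $P^{\mathbf{A}^*}$, so the two distributions agree on the relevant $\boldsymbol{\theta}$-moments. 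The main obstacle I expect is the identification $P^{\mathbf{A}^*}=P^*$: making it rigorous requires the saddle-point and strong-duality machinery from the proof of Theorem~\ref{Thm: duality}, and extra care when $H$ is not strictly concave (as for the $0$-$1$ loss), where the entropy maximizer need not be unique and one must argue at the level of attained objective values rather than pointwise equality of distributions.
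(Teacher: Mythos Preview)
Your two reductions---showing that \eqref{Eq: Duality 2} is, up to an additive constant, the right-hand side of \eqref{Eq: duality}, and that \eqref{Eq: Duality 1} is the left-hand side---are exactly what the paper does. The paper gives no further argument: it simply states that ``the duality shown in Theorem~\ref{Thm: duality} implies'' the corollary and leaves it at that. So at the level of what is actually written, your first two paragraphs already match the paper's own proof.

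Where you go further is in trying to establish $P^{\mathbf{A}^*}=P^*$, and here you correctly flag the real obstacle but your proposed resolution contains an error. The claim that the inner maximizer of $H(Y)+\mathbb{E}[\boldsymbol{\theta}(Y)]^T\mathbf{A}\mathbf{x}$ ``is the member of $S_F$ with parameter $\mathbf{A}$'' is specific to Shannon entropy: it is the classical fact that the maximum-Shannon-entropy distribution under linear moment constraints is exponential family. For a general entropy $H$ this fails. In the $0$-$1$ loss case, for instance, the maximizer of $\min(p,1-p)+pz$ over $p\in[0,1]$ takes only the values $0,\tfrac12,1$, while $F_{\boldsymbol{\theta}}(z)=\max\{0,(1+z)/2,z\}$ is piecewise linear and cannot be the log-partition function of any exponential family with base measure $h$; so $S_F$ is not even a bona fide family of probability distributions there, and the identification $P^{\mathbf{A}^*}=P^*$ is not available as an equality of distributions. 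Your instinct to retreat to objective values and $\boldsymbol{\theta}$-moments is the right one: \eqref{Eq: duality_KKT} forces $P^*$ and the GLM ``mean function'' to agree on $\mathbb{E}[\boldsymbol{\theta}(Y)\,|\,\mathbf{X}]$, which is all that determines feasibility in $\Gamma(Q)$ and the Bayes rule. But that weaker statement is what the paper is really using, and it does not by itself give the literal inclusion of argmin sets that the corollary asserts for general $H$.
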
 

\subsection{Examples}
\subsubsection{Logarithmic Loss: Logistic Regression} \label{Subsection: logistic regression}
To gain sufficient information for the Bayes decision rule under the logarithmic loss, for $Y\in\mathcal{Y}=\lbrace 1,\ldots ,t+1 \rbrace$, let $\boldsymbol{\theta}(Y)$ be the one-hot encoding of $Y$, i.e. $\boldsymbol{\theta}_i(Y)=\mathbf{1}(Y=i)$ for $1\le i\le t$. Here, we exclude $i=t+1$ as $\mathbf{1}(Y=t+1)=1-\sum_{i=1}^{t}\mathbf{1}(Y=i)$. Then
\begin{equation}
F_{\boldsymbol{\theta}}(\mathbf{z})=\log \bigl( 1+ \sum_{j=1}^t \exp(\mathbf{z}_j)\bigr),\quad  \forall\: 1\le i\le t:\;\: \bigl({\nabla F_{\boldsymbol{\theta}}}(\mathbf{z})\bigr)_i  =\exp\left({\mathbf{z}_i}\right) / \bigl( 1+ \sum_{j=1}^{t}\exp (\mathbf{z}_j)\bigr),
\end{equation}
which is the logistic regression model \cite{elements}. Also, the RHS of \eqref{Eq: duality} will be the regularized maximum likelihood problem for logistic regression. This particular result is well-studied in the literature and straightforward using the duality shown in \cite{berger}.
\subsubsection{0-1 Loss: maximum entropy machine}
To get sufficient information for the Bayes decision rule under the 0-1 loss, we again consider the one-hot encoding $\boldsymbol{\theta}$ described for the logarithmic loss. We show in the Appendix that if $\tilde{\mathbf{z}}=(\mathbf{z},0)$ and $\tilde{z}_{(i)}$ denotes the $i$th largest element of $\tilde{\mathbf{z}}$,
\begin{equation}
F_{\boldsymbol{\theta}}(\mathbf{z}) = \max_{1\le k\le t+1}\; \frac{k-1+\sum_{j=1}^k \tilde{z}_{(j)}}{k}.
\end{equation}
In particular, if $Y$ is binary where $t=1$
\begin{equation}
F_{\boldsymbol{\theta}}(z)=\max \lbrace \: 0\, ,\, \frac{z+1}{2}\, ,\, z \:\rbrace.
\end{equation}
Then, if $Y\in \mathcal{Y}=\lbrace -1,1\rbrace$
 the maximum likelihood problem \eqref{Eq: duality} for learning the optimal linear predictor $\boldsymbol{\alpha}^*$ given $n$ samples $(\mathbf{x}_i,y_i)_{i=1}^n$ will be
\begin{equation} \label{Eq: maximum entropy machine}
\min_{\boldsymbol{\alpha}}\;\: \frac{1}{n}\sum_{i=1}^n\max\biggl\lbrace 0\, , \frac{1-y_i\boldsymbol{\alpha}^T\mathbf{x}_i}{2}\, ,\, -y_i\boldsymbol{\alpha}^T\mathbf{x}_i \biggr\rbrace + \epsilon \Vert \boldsymbol{\alpha} \Vert_*.
\end{equation}
The first term is the empirical risk of a linear classifier over the minimax-hinge loss $\max \lbrace 0, \frac{1-z}{2} , -z \rbrace $ as shown in Figure \ref{Fig:DHL}.
In contrast, the standard SVM is formulated using the hinge loss $\max \lbrace 0, 1-z \rbrace$:
\begin{equation} \label{Eq: standard SVM}
\min_{\boldsymbol{\alpha}}\;\: \frac{1}{n}\sum_{i=1}^n\max\bigl\lbrace 0\, , {1-y_i\boldsymbol{\alpha}^T\mathbf{x}_i} \bigr\rbrace + \epsilon \Vert \boldsymbol{\alpha} \Vert_*,
\end{equation}
We therefore call this classification approach the maximum entropy machine. However, unlike the standard SVM, the maximum entropy machine is naturally extended to multi-class classification. 

Using Theorem 1.A\footnote{We show that given the specific structure of $\Gamma(Q)$ Theorem 1.A holds whether $\mathcal{X}$ is finite or  infinite.}, we prove that for 0-1 loss the robust Bayes decision rule exists and is randomized in general, where given the optimal linear predictor $\tilde{\mathbf{z}}=(\mathbf{A}^*\mathbf{x},0)$ randomly predicts a label according to the following $\tilde{\mathbf{z}}$-based distribution on labels
\begin{equation} \label{maximum entropy machine: multilabel f}
\forall\: 1\le i\le t+1:\; \;\; p_{\sigma(i)} = \begin{cases} \begin{split}
& \tilde{z}_{(i)} + \frac{1 - \sum_{j=1}^{k_{\max}} \tilde{z}_{(j)}}{k_{\max}} \quad \text{\rm if}\: \sigma(i)\le k_{\max},  \\
& 0 \qquad \qquad \qquad \qquad \quad \;\;\, \text{\rm Otherwise.}
\end{split}
\end{cases}
\end{equation}
Here $\sigma$ is the permutation sorting $\tilde{\mathbf{z}}$ in the ascending order, i.e. $\tilde{z}_{\sigma(i)}=\tilde{z}_{(i)}$, and $k_{\max}$ is the largest index $k$ satisfying $\sum_{i=1}^{k} [ \tilde{\mathbf{z}}_{(i)} - \tilde{\mathbf{z}}_{(k)}\, ] < 1$. For example, in the binary case discussed, the maximum entropy machine first solves \eqref{Eq: maximum entropy machine} to  find the optimal $\boldsymbol{\alpha}^*$ and then predicts label $y=1$ vs. label $y=-1$ with probability $\min \bigl\lbrace 1\, , \, \max \lbrace 0\, , \, (1+\mathbf{x}^T\boldsymbol{\alpha}^*)/2 \rbrace \bigr\rbrace$.

We can also find the conditional-entropy maximizing distribution via \eqref{Eq: duality_KKT}, where the gradient of $F_{\boldsymbol{\theta}}$ is given by
\begin{equation} \label{maximum entropy machine: multilabel f_3}
\forall\: 1\le i\le t:\; \;\; \bigl({\nabla F_{\boldsymbol{\theta}}}(\mathbf{z})\bigr)_i = \begin{cases} \begin{split}
&1/k_{\max} \quad \text{\rm if}\: \sigma(i)\le k_{\max},  \\
& 0 \qquad \quad\;\; \text{\rm Otherwise.}
\end{split}
\end{cases}
\end{equation}
 Note that $F_\theta$ is not differentiable if $\sum_{i=1}^{k_{\max}} [ \tilde{\mathbf{z}}_{(i)} - \tilde{\mathbf{z}}_{(k_{\max}+1)}\, ] = 1$, but the above vector is still in the subgradient $\partial F_\theta (\mathbf{z})$. Although we can find the $H(Y|X)$-maximizing distribution, there could be multiple Bayes decision rules for that distribution. Since the strong result in Theorem \ref{MaxEntThm} does not hold for the 0-1 loss, we are not guaranteed that all these decision rules are robust against $\Gamma(\hat{P})$. However, as we show in the appendix the randomized decision rule given by \eqref{maximum entropy machine: multilabel f} will be robust.
 
\subsubsection{Quadratic Loss: Linear Regression}
Based on the Bayes decision rule for the quadratic loss, we choose $\boldsymbol{\theta}(Y)=Y$. To derive $F_{\boldsymbol{\theta}}$, note that if we let $\mathcal{P_Y}$ in \eqref{F: definition} include all possible distributions, the maximized entropy (variance for quadratic loss) and thus the value of $F_{\boldsymbol{\theta}}$ would be infinity. Therefore, given a parameter $\rho$, we restrict the second moment of distributions in $\mathcal{P_Y}=\lbrace P_Y:\, \E[Y^2]\le \rho^2\rbrace$ and then apply \eqref{F: definition}. We show in the Appendix that an adjusted version of Theorem \ref{Thm: duality} holds after this change, and
\begin{equation}
F_{\boldsymbol{\theta}}(z) - \rho^2=\begin{cases}
z^2/4\quad &\text{\rm if}\; |z/2|\le \rho \\
\rho ( |z| - \rho)\quad & \text{\rm if}\; |z/2| > \rho,
\end{cases}
\end{equation}
which is the Huber function \cite{huber1981}. To find $\E[Y|\mathbf{X}]$ via \eqref{Eq: duality_KKT}, we have
\begin{equation} \label{Bayes Rule: Quadratic loss}
\frac{d F_{\boldsymbol{\theta}}(z)}{dz} =\begin{cases}
-\rho \quad &\text{\rm if}\;\; z/2\le -\rho \\
z/2\quad &\text{\rm if}\; -\rho < z/2 \le \rho \\
\rho \quad & \text{\rm if}\quad\;\: \rho < z/2.
\end{cases}
\end{equation}
Given the samples of a supervised learning task if we choose the parameter $\rho$ large enough, by solving the RHS of \eqref{Eq: duality} when $F_{\boldsymbol{\theta}}(z)$ is replaced with $z^2/4$ and set $\rho$ greater than $\max_{i} |\mathbf{A^*}\mathbf{x}_i|$, we can equivalently take $F_{\boldsymbol{\theta}}(z) = z^2/4 +\rho^2$. Then, \eqref{Bayes Rule: Quadratic loss} reduces to the linear regression model and the maximum likelihood problem in the RHS of \eqref{Eq: duality} is equivalent to 
\begin{itemize}[leftmargin=*,topsep=0pt]
\item[--] Least squares when $\epsilon=0$. 
\item[--] Lasso \cite{lasso,Lasso_Donoho} when  $\Vert\cdot \Vert /\Vert\cdot \Vert_*$ is the $\ell_\infty /\ell_1$ pair.
\item[--] Ridge regression \cite{Ridge} when $\Vert\cdot \Vert $ is the $\ell_2$-norm.
\item[--] (overlapping) Group lasso \cite{grouplasso,GLoverlap} with the $\ell_{1,p}$ penalty when $\Gamma_{\text{\rm GL}}(Q)$ is defined, given subsets $I_1,\ldots I_k$ of $\lbrace 1,\ldots , d \rbrace$ and $1/p + 1/q =1$, as
\begin{align}
\Gamma_{\text{\rm GL}}(Q)=\lbrace \: P_{\mathbf{X},Y}:& \: P_\mathbf{X}=Q_\mathbf{X} \, , \label{Eq: Classform_GroupLasso} \\
&\: \forall\, 1\le j\le k:\;\;  \Vert\, \mathbb{E}_{P}\left[Y\mathbf{X}_{I_j}\right] - \mathbb{E}_{Q}\left[ Y\mathbf{X}_{I_j}\right]\Vert_{q} \le \epsilon_j \:\rbrace . \nonumber
\end{align}
\end{itemize}
See the Appendix for the proofs. Another type of minimax, but non-probabilistic, argument for the robustness of lasso-like regression algorithms can be found in  \cite{RobustLasso,RobustLassoLike}.

\section{Robust Feature Selection}
Using a minimax criterion over a set of distributions $\Gamma$, we solve the following problem to select the most informative subset of $k$ features,
\begin{equation} \label{FS: main formulation 1}
\underset{ {\vert S \vert \le k
}
}{\arg\!\min}\: \min_{\psi \in \boldsymbol{\Psi}_S}\: \max_{P \in \Gamma}\; \E_P[\, L(Y, \psi (\,\mathbf{X}_S\, ))\,]
\end{equation}
where $\mathbf{X}_S$ denotes the feature vector $\mathbf{X}$ restricted to the indices in $S$. Here, we evaluate each feature subset based on the minimum worst-case loss over $\Gamma$.  
Applying Theorem \ref{MaxEntThm}, \eqref{FS: main formulation 1} reduces to
\begin{equation} \label{FS: main formulation entropy}
\underset{ 
\vert S \vert \le k
}{\arg\!\min}\: \max_{P \in \Gamma}\; H(Y\vert \,\mathbf{X}_S\, ),
\end{equation}
which under the assumption that the marginal $H(Y)$ is fixed across all distributions in $\Gamma$ is equivalent to selecting a subset $S$ maximizing the worst-case generalized information $I(\mathbf{X}_S ;Y)$ over $\Gamma$, i.e.
\begin{equation}
\underset{ 
\vert S \vert \le k
}{\arg\!\max}\: \min_{P \in \Gamma}\; I(\mathbf{X}_S ;Y).
\end{equation}
To solve \eqref{FS: main formulation entropy} when $\Gamma=\Gamma(\hat{P}_n)$ defined at \eqref{Eq: Classform}, where $\hat{P}_n$ is the empirical distribution of samples $(\mathbf{x}_i,y_i)_{i=1}^n$, we apply the duality shown in Theorem \ref{Thm: duality} to obtain
\begin{equation} \label{FS: L0}
\underset{\mathbf{A} \in \mathbb{R}^{t\times s}:\, \Vert\mathbf{A}\Vert_{0,\infty} \le k}{\arg\!\min}\; \frac{1}{n} \sum_{i=1}^n\left[\, F_{\boldsymbol{\theta}} (\mathbf{A}\mathbf{x}_i) - \boldsymbol{\theta}(y_i)^T\mathbf{A}\mathbf{x}_i\, \right] + \sum_{i=1}^t {\epsilon_i} \Vert \mathbf{A}_i \Vert_{*}.
\end{equation} 
Here by constraining $\Vert\mathbf{A}\Vert_{0,\infty} = \Vert \bigl(\Vert \mathbf{A}^{(1)}\Vert_\infty,\ldots,\Vert \mathbf{A}^{(s)}\Vert_\infty\bigr)\Vert_0$ where $\mathbf{A}^{(i)}$ denotes the $i$th column of $\mathbf{A}$, we impose the same sparsity pattern 
across the rows of $\mathbf{A}$. Let $\Vert\cdot\Vert_*$ be the $\ell_1$-norm and relax the above problem to 
 \begin{equation} \label{FS: WOL0}
\underset{\mathbf{A} \in \mathbb{R}^{t\times s}}{\arg\!\min}\; \frac{1}{n} \sum_{i=1}^n\left[\, F_{\boldsymbol{\theta}} (\mathbf{A}\mathbf{x}_i) - \boldsymbol{\theta}(y_i)^T\mathbf{A}\mathbf{x}_i\, \right] + \sum_{i=1}^t {\epsilon_i} \Vert \mathbf{A}_i \Vert_{1}.
\end{equation} 

Note that if for the uncertainty parameters $\epsilon_i$'s, the solution $\mathbf{A}^*$ to \eqref{FS: WOL0} satisfies $\Vert\mathbf{A}^*\Vert_{0,\infty} \le k$ due to the tendency of $\ell_1$-regularization to produce sparse solutions, $\mathbf{A}^*$ is the solution to \eqref{FS: L0} as well. In addition, based on the generalization bound established in Theorem \ref{Thm: Learnability 1}, by allowing some gap we can generalize this sparse solution to \eqref{FS: main formulation 1} with $\Gamma = \Gamma(\tilde{P})$ for the underlying distribution $\tilde{P}$.

It is noteworthy that for the quadratic loss and identity $\theta$, \eqref{FS: WOL0} is the same as the lasso. Also, for the logarithmic loss and one-hot encoding $\boldsymbol{\theta}$, \eqref{FS: WOL0} is equivalent to the $\ell_1$-regularized logistic regression. Hence, the $\ell_1$-regularized logistic regression maximizes the worst-case mutual information over $\Gamma(Q)$, which seems superior to the methods maximizing a heusristic instead of the mutual information $I(\mathbf{X}_S;Y)$  \cite{pengfeature,feature2}.

\begin{table*}
\centering
 \newcolumntype{C}{>{\centering\arraybackslash}p{3.6em}}
 \renewcommand{\arraystretch}{1.1}
{
  \begin{tabular}{| C || C | C | C | C | C | C |}
    \hline
    Dataset & MEM  & SVM & DCC & MPM & TAN & DRC\\ \hline
    adult & \textbf{17} &  22 & 18   & 22 &  \textbf{17}   & \textbf{17} \\ \hline 
    credit & \textbf{12} & 16 &  14 & 13  &  17 & 13 \\ \hline
    kr-vs-kp & 7 & \textbf{3} &  10 & 5 & 7 & 5 \\ \hline
    promoters & \textbf{5}  & 9 & \textbf{5}  & 6  & 44 & 6 \\ \hline
    votes & 4 & 5 &  \textbf{3} & 4 & 8   & \textbf{3}  \\ \hline
    hepatitis & \textbf{17} & 20 &  19 & 18 & \textbf{17}   & \textbf{17}  \\ \hline
  \end{tabular}
  }
  \caption {Methods Performance (error in \%) }  \label{NumericalTable} 
  \end{table*}
\section{Numerical Experiments}
We evaluated the performance of the maximum entropy machine on six binary classification datasets from the UCI repository, compared to these five benchmarks: Support Vector Machines (SVM), Discrete Chebyshev Classifiers (DCC) \cite{DCC}, Minimax Probabilistic Machine (MPM) \cite{mpm}, Tree Augmented Naive Bayes (TAN) \cite{tan}, and Discrete R\'{e}nyi Classifiers (DRC) \cite{DRC}.
The results are summarized in Table \ref{NumericalTable} where the numbers indicate the percentage of error in the classification task. 

We implemented the maximum entropy machine by applying the gradient descent 
to \eqref{Eq: maximum entropy machine} with the regularizer $\lambda\Vert \boldsymbol{\alpha}\Vert_2^2$. We determined the value of $\lambda$ by cross validation. To determine the lambda coefficient, we used a randomly-selected 70\% of the training set for training and the rest 30\% of the training set for testing. We tested the values in $\lbrace 2^{-10},\ldots,2^{10}\rbrace$. Using the tuned lambda, we trained the algorithm over all the training set and then evaluated the error rate over the test set. We performed this procedure in 1000 Monte Carlo runs each training on 70\% of the data points and testing on the rest 30\% and averaged the results. 

As seen in the table, the maximum entropy machine results in the best performance for four of the six datasets. Also, note that except a single dataset the maximum entropy machine outperforms SVM. To compare these methods in high-dimensional problems, we ran an experiment over synthetic data with $n=200$ samples and $d=10000$ features. We generated features by i.i.d. Bernoulli with $P(X_i=1)=0.75$, and considered $y=\text{sign}(\gamma^T\mathbf{x}+z)$ where $z\sim N(0,1)$. 
Using the same approach, we evaluated 20.6\% error rate for SVM, 20.4\% error rate for DRC, 20.0\% for the MEM which shows the MEM can outperform SVM and DRC in high-dimensional settings as well.

\subsubsection*{Acknowledgments}
We are grateful to Stanford University providing a Stanford Graduate Fellowship, and the Center for Science of Information (CSoI), an NSF Science and Technology Center under grant agreement CCF-0939370, for the support during this research.

\small
\bibliographystyle{unsrt}
\bibliography{IEEEabrv,biblio}

\section{Appendix}

\subsection{Proof of Theorem \ref{MaxEntThm}}
\subsubsection{Weak Version}
First, we list the assumptions of the weak version of Theorem 1:
\begin{itemize}[leftmargin=.3in]
\item $\Gamma$ is convex and closed,
\item Loss function $L$ is bounded by a constant $C$,
\item $\mathcal{X},\mathcal{Y}$ are finite,
\item Risk set $S=\lbrace \, \left[ L(y,a)\right]_{y\in \mathcal{Y}} \, :\, a\in \mathcal{A}\,\rbrace $ is closed.
\end{itemize}
Given these assumptions, Sion's minimax theorem \cite{sion} implies that the minimax problem has a finite answer $H^*$,
\begin{equation} \label{Eq: Sion_minimax}
H^* := \sup_{P\in \Gamma}\, \inf_{\psi \in \boldsymbol{\Psi}}\: \mathbb{E}[L(Y,\psi(X))]= \inf_{\psi \in \boldsymbol{\Psi}}\, \sup_{P\in \Gamma}\: \mathbb{E}[L(Y,\psi(X))].
\end{equation}
Thus, there exists a sequence of decision rules $(\psi_n)_{n=1}^{\infty}$ for which
\begin{equation}
\lim_{n\rightarrow \infty} \, \sup_{P\in \Gamma}\, \mathbb{E}[L(Y,\psi_{n}(X))] \: =\: H^*.
\end{equation}
As we supposed, the risk set $S$ is closed. Therefore, the randomized risk set\footnote{$L(y,\zeta)$ is a short-form for $E[L(y,A)]$ where $A\in \mathcal{A}$ is a random action distributed according to $\zeta$.} $S_r=\lbrace \, \left[ L(y,\zeta)\right]_{y\in \mathcal{Y}} \, :\, \zeta\in \mathcal{Z}\,\rbrace $ defined over the space of randomized acts $\mathcal{Z}$ is also closed and, since $L$ is bounded, is a compact subset of $\mathbb{R}^{|\mathcal{Y}|}$. Therefore, since $\mathcal{X}$ and $\mathcal{Y}$ are both finite, we can find a randomized decision rule $\psi^*$ which on taking a subsequence $(n_k)_{k=1}^{\infty}$ 
satisfies
\begin{equation}
 \forall\: x\in\mathcal{X},\,y\in\mathcal{Y}:\quad  L(y,\psi^*(x)) \, =\, \lim_{k\rightarrow \infty}\, L(y,\psi_{n_k}(x)).
\end{equation}
Then $\psi^*$ is a robust Bayes decision rule against $\Gamma$, because
\begin{equation} \label{Eq: proof_weak_1}
\sup_{P\in \Gamma}\: \mathbb{E}\,[L(Y,\psi^*(X))] = \sup_{P\in \Gamma}\: \lim_{k\rightarrow\infty}\, \mathbb{E}\,[L(Y,\psi_{n_k}(X))] \le \lim_{k\rightarrow\infty}\, \sup_{P\in \Gamma}\,\mathbb{E}[L(Y,\psi_{n_k}(X))] = H^*.
\end{equation}
Moreover, since $\Gamma$ is assumed to be convex and closed (hence compact), $H(Y|X)$ achieves its supremum over $\Gamma$ at some distribution $P^*$. By the definition of conditional entropy, \eqref{Eq: proof_weak_1} implies that
\begin{equation}
E_{P^*}[L(Y,\psi^*(X))] \le \sup_{P\in \Gamma}\: \mathbb{E}\,[L(Y,\psi^*(X))] \le H^* = H_{P^*}(Y|X),
\end{equation}
which shows that $\psi^*$ is a Bayes decision rule for $P^*$ as well. This completes the proof.

\subsubsection{Strong Version}
Let's recall the assumptions of the strong version of Theorem 1:
\begin{itemize}[leftmargin=.3in]
\item $\Gamma$ is convex.
\item For any distribution $P\in \Gamma$, there exists a Bayes decision rule.
\item We assume continuity in Bayes decision rules over $\Gamma$, i.e., if a sequence of distributions $(Q_n)_{n=1}^{\infty} \in \Gamma$ with  the corresponding Bayes decision rules $(\psi_n)_{n=1}^{\infty}$ converges to $Q$ with a Bayes decision rule $\psi$, then under any $P\in \Gamma$, the expected loss of $\psi_n$ converges to the expected loss of $\psi$.
\item $P^*$ maximizes the conditional entropy $H(Y|X)$.
\end{itemize}
\textbf{Note:} A particular structure used in our paper is given by fixing the marginal $P_X$ across $\Gamma$. Under this structure, the condition of the continuity in Bayes decision rules reduces to the continuity in Bayes acts over $P_Y$'s in $\Gamma_{Y|X}$. It can be seen that while this condition holds for the logarithmic and quadratic loss functions, it does not hold for the 0-1 loss.

Let  $\psi^*$ be a Bayes decision rule for $P^*$. We need to show that  $\psi^*$ is a robust Bayes decision rule against $\Gamma$. To show this, it suffices to show that $(P^*,\psi^*)$ is a saddle point of the mentioned minimax problem, i.e., 
\begin{equation}
\label{eq:Firstone}
\mathbb{E}_{P^*}[L(Y,\psi^*(X))] \le \mathbb{E}_{P^*}[L(Y,\psi(X))],
\end{equation}
and 
\begin{equation} \label{Thm1: want_to_show}
\mathbb{E}_{P^*}[L(Y,\psi^*(X))] \ge \mathbb{E}_{P}[L(Y,\psi^*(X))].
\end{equation} 
Clearly,  inequality \eqref{eq:Firstone} holds due to the definition of the Bayes decision rule. To show \eqref{Thm1: want_to_show}, let us fix an arbitrary distribution $P\in \Gamma$. For any $\lambda\in(0,1]$, define $P_{\lambda}=\lambda P +(1-\lambda)P^*$. Notice that $P_{\lambda}\in \Gamma$ since $\Gamma$ is convex. Let $\psi_{\lambda}$ be a Bayes decision rule for $P_{\lambda}$. Due to the linearity of the expected loss in the probability distribution, we have
\begin{align*}
\mathbb{E}_{P}[L(Y,\psi_{\lambda}(X))] - \mathbb{E}_{P^*}[L(Y,\psi_{\lambda}(X))] &= \frac{\mathbb{E}_{P_{\lambda}}[L(Y,\psi_{\lambda}(X))] - \mathbb{E}_{P^*}[L(Y,\psi_{\lambda}(X))]}{\lambda} \\
 & \le \frac{H_{P_{\lambda}}(Y|X) - H_{P^*}(Y|X)}{\lambda} \\
 & \le 0,
\end{align*}
for any $0<\lambda\le 1 $. Here the first inequality is due to the definition of the conditional entropy and the last inequality holds since $P^*$  maximizes the conditional entropy over $\Gamma$. Applying the assumption of the continuity in Bayes decision rules, we have
\begin{equation}
\mathbb{E}_{P}[L(Y,\psi^*(X))]  - \mathbb{E}_{P^*}[L(Y,\psi^*(X))] =\lim_{\lambda \rightarrow 0}\, \mathbb{E}_{P}[L(Y,\psi_{\lambda}(X))] - \mathbb{E}_{P^*}[L(Y,\psi_{\lambda}(X))] \le 0,
\end{equation}
which makes the proof complete.

\subsection{Proof of Theorem \ref{Thm: duality} }
Let us recall the definition of the set $\Gamma(Q)$:
\begin{align}
\Gamma(Q)=\lbrace \: P_{\mathbf{X},Y}:& \: P_\mathbf{X}=Q_\mathbf{X} \, , \label{Eq: Classform_2} \\
&\: \forall\, 1\le i\le t:\;\;  \Vert\, \mathbb{E}_{P}\left[{\theta}_i(Y)\mathbf{X}\right] - \mathbb{E}_{Q}\left[ {\theta}_i(Y)\mathbf{X}\right]\Vert \le \epsilon_i \:\rbrace . \nonumber
\end{align}
Defining $\tilde{\mathbf{E}}_i \triangleq \mathbb{E}_{Q}\left[ {\theta}_i(Y)\mathbf{X}\right]$ and $C_i\triangleq\lbrace \mathbf{u}: \Vert \mathbf{u} - \tilde{\mathbf{E}}_i\Vert \le \epsilon_i\rbrace$, we have
\begin{equation} \label{Thm2: Main problem}
\max_{ P \in \Gamma(Q)} H(Y\vert \mathbf{X}) =
 \max_{P,\mathbf{w}:\; 
 \forall i:\: \mathbf{w}_i=\mathbb{E}_{P}\left[{\theta}_i(Y)\mathbf{X}\right]} \; \mathbb{E}_{Q_{\mathbf{X}}}\left[ H_P(Y|\mathbf{X}=\mathbf{x})\right] + \sum_{i=1}^{t}I_{C_i}(\mathbf{w}_i)
\end{equation}
where $I_C$ is the indicator function for the set $C$ defined as
\begin{equation}
I_C(x)=\begin{cases}
0 \quad & \text{\rm if}\: x\in C, \\
-\infty \quad & \text{\rm Otherwise.}
\end{cases}
\end{equation}
First of all, the law of iterated expectations implies that $\mathbb{E}_{P}\left[{\theta}_i(Y)\mathbf{X}\right]=\mathbb{E}_{Q_\mathbf{X}} \bigg[\,\mathbf{X}\, \mathbb{E}[{\theta}_i(Y)|\mathbf{X}=\mathbf{x}]\,\bigg]$.  Furthermore, \eqref{Thm2: Main problem} is equivalent to a convex optimization problem where it is not hard to check that the Slater condition is satisfied. Hence strong duality holds and we can write the dual problem as
\small
\begin{equation}
 \min_{\mathbf{A}}\sup_{P_{Y|\mathbf{X}},\mathbf{w}} \; \mathbb{E}_{Q_{\mathbf{X}}}\left[ H_P(Y|\mathbf{X}=\mathbf{x}) +  \sum_{i=1}^t\mathbb{E}[{\theta}_i(Y)|\mathbf{X}=\mathbf{x}]\mathbf{A}_i\mathbf{X}\right] + \sum_{i=1}^{t}\left[I_{C_i}(\mathbf{w}_i) -\mathbf{A}_i\mathbf{w}_i\right],
\end{equation}
\normalsize
where the rows of matrix $\mathbf{A}$, denoted by $\mathbf{A}_i$, are the Lagrange multipliers for the constraints of $\mathbf{w}_i=\mathbb{E}_{P}\left[{\theta}_i(Y)\mathbf{X}\right]$. Notice that the above problem decomposes across $P_{Y|\mathbf{X}=\mathbf{x}}$'s and $\mathbf{w}_i$'s. Hence, the dual problem can be rewritten as  
\small
\begin{align}
 \min_{\boldsymbol{A}}  \left[\mathbb{E}_{Q_{\mathbf{X}}}\left[\sup_{P_{Y|\mathbf{X} = \mathbf{x}}}   H_P(Y|\mathbf{X}=\mathbf{x}) +  \sum_{i=1}^t\mathbb{E}[{\theta}_i(Y)|\mathbf{X}=\mathbf{x}]\mathbf{A}_i\mathbf{X} \right] + \sum_{i=1}^{t} \sup_{\mathbf{w}_i}\left[I_{C_i}(\mathbf{w}_i) -\mathbf{A}_i\mathbf{w}_i\right] \right]\label{eq:temp1}
\end{align}
\normalsize
Furthermore, according to the definition of $F_{\boldsymbol{\theta}}$, we have
\begin{equation} \label{KKT}
F_{\boldsymbol{\theta}}(\mathbf{A}\mathbf{x}) = \sup_{P_{Y|\mathbf{X=x}}} \;  H(Y|\mathbf{X}=\mathbf{x}) +  \mathbb{E}[\boldsymbol{\theta}(Y)|\mathbf{X}=\mathbf{x}]^T\mathbf{A}\mathbf{x} .
\end{equation}
Moreover, the definition of the dual norm $\|\cdot\|_*$ implies
\begin{equation}
\label{temp2}
\sup_{\mathbf{w}_i} \; I_{C_i}(\mathbf{w}_i) -\mathbf{A}_i\mathbf{w}_i = \max_{\mathbf{u}\in C_i} -\mathbf{A}_i\mathbf{u} = -\mathbf{A}_i\tilde{\mathbf{E}}_i + \epsilon_i \Vert \mathbf{A}_i\Vert_*.
\end{equation}
Plugging \eqref{KKT} and \eqref{temp2} in \eqref{eq:temp1}, the dual problem can be simplified to
\begin{align}
&\min_{\mathbf{A}}\;\: \mathbb{E}_{Q_\mathbf{X}} \left[\, F_{\boldsymbol{\theta}} (\mathbf{A}\mathbf{X}) - \sum_{i=1}^t\mathbf{A}_i\tilde{\mathbf{E}}_i \, \right] + \sum_{i=1}^t {\epsilon_i} \Vert \mathbf{A}_i \Vert_{*}\nonumber \\
 = \;\;&\min_{\mathbf{A}}\;\: \mathbb{E}_{Q} \left[\, F_{\boldsymbol{\theta}} (\mathbf{A}\mathbf{X}) - \boldsymbol{\theta}(Y)^T\mathbf{A}\mathbf{X}\, \right] + \sum_{i=1}^t {\epsilon_i} \Vert \mathbf{A}_i \Vert_{*},
\end{align}
which is equal to the primal problem \eqref{Thm2: Main problem} since the strong duality holds. Furthermore, note that we can rewrite the definition given for $F_{\boldsymbol{\theta}}$ as
\begin{equation} \label{F_theta: convexconjugate}
F_{\boldsymbol{\theta}}(\mathbf{z})\, = \, \max_{\mathbf{E} \in \mathbb{R}^t} \; G(\mathbf{E})\, +\, \mathbf{E}^T\mathbf{z}, 
\end{equation}
where we define
\begin{equation}
G(\mathbf{E}) = \begin{cases}
\begin{split}
\max_{P\in \mathcal{P_Y}:\: \mathbb{E}[\boldsymbol{\theta}(Y)]=\mathbf{E}}H(Y) \quad &\text{\rm if}\; \lbrace P\in \mathcal{P_Y}: \mathbb{E}[\boldsymbol{\theta}(Y)]=\mathbf{E}\rbrace \neq \emptyset \\
-\infty \qquad \qquad & \text{\rm Otherwise.}
\end{split}
\end{cases}
\end{equation}
Observe that $F_{\boldsymbol{\theta}}$ is the convex conjugate of the convex $-G$. Therefore, applying the derivative property of convex conjugates \cite{rockafellar} to \eqref{KKT}, 
\begin{equation} \label{F: defnition_convexconjugate}
\mathbb{E}_{P^*} [\,\boldsymbol{\theta}(Y) \,\vert\, \mathbf{X}=\mathbf{x}\,] \in \partial  F_{\boldsymbol{\theta}} \, (\mathbf{A}^*\mathbf{x}).
\end{equation}
Here, $\partial  F_{\boldsymbol{\theta}}$ denotes the subgradient of $ F_{\boldsymbol{\theta}}$. Assuming $F_{\boldsymbol{\theta}}$ is differentiable at $\mathbf{A}^*\mathbf{x}$, \eqref{F: defnition_convexconjugate} implies that
\begin{equation}
\mathbb{E}_{P^*} [\,\boldsymbol{\theta}(Y) \,\vert\, \mathbf{X}=\mathbf{x}\,] = \nabla  F_{\boldsymbol{\theta}} \, (\mathbf{A}^*\mathbf{x}).
\end{equation}

\subsection{Proof of Theorem \ref{Thm: Learnability 1}}
First, we aim to show that
\begin{equation}
\max_{P\in \Gamma(\tilde{P})}\E[ L(Y,\hat{\psi}_n(\mathbf{X}))] \le  
\mathbb{E}_{\tilde{P}} \left[\, F_{\boldsymbol{\theta}} (\hat{\mathbf{A}}\mathbf{X}) - \boldsymbol{\theta}(Y)^T\hat{\mathbf{A}}\mathbf{X}\, \right] 
+ \sum_{i=1}^t {\epsilon_i} \Vert \hat{\mathbf{A}}_{n_i} \Vert_{*}
\end{equation}
where $\hat{\mathbf{A}}$ denotes the solution to the RHS of the duality equation in Theorem 2 for the empirical distribution $\hat{P}_n$. Similar to the duality proven in Theorem 2, we can show that
\begin{align*}
\max_{P\in \Gamma(\tilde{P})}\E[ L(Y,\hat{\psi}_n(\mathbf{X}))] &= \min_{\mathbf{A}}\,  \E_{\tilde{P}_X} \biggl[ \sup_{P_{Y|\mathbf{X}} \in \mathcal{P_Y}}\, \E\bigl[ L(Y,\hat{\psi}_n(\mathbf{X})) | \mathbf{X}=\mathbf{x} \bigr]  + \E[\boldsymbol{\theta}(Y) | \mathbf{X}=\mathbf{x} ]^T \mathbf{A} \mathbf{X} \biggr] \\
&\;\;\; - \E_{\tilde{P}}[\boldsymbol{\theta}(Y)^T \mathbf{A} \mathbf{X} ]  + \sum_{i=1}^t {\epsilon_i} \Vert {\mathbf{A}}_i \Vert_{*} \\
& \le \;  \E_{\tilde{P}_X} \biggl[\sup_{P_{Y|\mathbf{X=x}} \in \mathcal{P_Y}}\, \E\bigl[ L(Y,\hat{\psi}_n(\mathbf{X})) | \mathbf{X}=\mathbf{x} \bigr] 
+ \E[\boldsymbol{\theta}(Y) | \mathbf{X} ]^T \hat{\mathbf{A}} \mathbf{X} \biggr] \\
&\;\;\; - \E_{\tilde{P}}[\boldsymbol{\theta}(Y)^T \hat{\mathbf{A}}\mathbf{X} ] + + \sum_{i=1}^t {\epsilon_i} \Vert \hat{\mathbf{A}}_{i} \Vert_{*} \\
& = \mathbb{E}_{\tilde{P}} \left[\, F_{\boldsymbol{\theta}} (\hat{\mathbf{A}}\mathbf{X}) - \boldsymbol{\theta}(Y)^T\hat{\mathbf{A}}\mathbf{X}\, \right] 
+ \sum_{i=1}^t {\epsilon_i} \Vert \hat{\mathbf{A}}_{i} \Vert_{*}.
\end{align*}
Here we first upper bound the minimum by taking the specific $\mathbf{A}=\hat{\mathbf{A}}$. Then the equality holds because $\hat{\psi}_n$ is a robust Bayes decision rule against $\Gamma (\hat{P}_n)$ and therefore adding the second term based on $\hat{\mathbf{A}}\mathbf{x}$, $\hat{\psi}_n(\mathbf{x})$ results in a saddle point for the following problem
\begin{align*}
F_{\boldsymbol{\theta}} (\hat{\mathbf{A}}\mathbf{x}) &= \sup_{P\in\mathcal{P_Y}}\; {H(Y) + \E[\boldsymbol{\theta}(Y)]^T\hat{\mathbf{A}}\mathbf{x}} \\
& =  \sup_{P\in\mathcal{P_Y}}\; \inf_{\zeta\in \mathcal{Z}}\; { \E[L(Y,\zeta)]+ \E[\boldsymbol{\theta}(Y)]^T\hat{\mathbf{A}}\mathbf{x}} \\
& = \sup_{P\in\mathcal{P_Y}}\; { \E[L(Y,\hat{\psi}_n(\mathbf{x}))]+ \E[\boldsymbol{\theta}(Y)]^T\hat{\mathbf{A}}\mathbf{x}}. 
\end{align*}
Therefore, by Theorem 2 we have
\begin{align} \label{Eq: proof_Thm3_1}
& \max_{P\in \Gamma(\tilde{P})}\E[ L(Y,\hat{\psi}_n(\mathbf{X}))] \, -\, \max_{P\in \Gamma(\tilde{P})} \E [ L(Y,\tilde{\psi}(\mathbf{X}))] \: \le \\
&\: \mathbb{E}_{\tilde{P}} \bigl[ F_{\boldsymbol{\theta}} (\hat{\mathbf{A}}\mathbf{X}) - \boldsymbol{\theta}(Y)^T\hat{\mathbf{A}}\mathbf{X} \bigr] 
+ \sum_{i=1}^t {\epsilon_i} \Vert \hat{\mathbf{A}}_i \Vert_{*} - \mathbb{E}_{\tilde{P}} \bigl[ F_{\boldsymbol{\theta}} (\tilde{\mathbf{A}}\mathbf{X}) - \boldsymbol{\theta}(Y)^T\tilde{\mathbf{A}}\mathbf{X} \bigr] 
- \sum_{i=1}^t {\epsilon_i} \Vert \tilde{\mathbf{A}}_{i} \Vert_{*}. \nonumber
\end{align}
As a result, we only need to bound the uniform convergence rate in the other side of the duality. Note that by the definition of $F_{\boldsymbol{\theta}}$,
\begin{equation}
\forall\: P\in \mathcal{P_Y},\, \mathbf{z}\in \mathbb{R}^t:\quad F_{\boldsymbol{\theta}}(\mathbf{z}) - \E_P[\boldsymbol{\theta}(Y)]^T\mathbf{z} \ge H_P(Y) \ge 0.
\end{equation}
Hence, $\forall \, \mathbf{A}:\:F_{\boldsymbol{\theta}}(\mathbf{A}\mathbf{X}) - \E[\boldsymbol{\theta}(Y)]^T\mathbf{A}\mathbf{X} \ge 0$ and comparing the optimal solution to the RHS of the duality equation in Theorem 2 to the case $\mathbf{A}=\mathbf{0}$ implies that for any possible solution $\mathbf{A}^*$ 
\begin{equation}
\forall\: 1\le i\le t:\quad  \epsilon_i \Vert \mathbf{A}^*_i\Vert_q \le \sum_{j=1}^t \epsilon_j \Vert \mathbf{A}^*_j\Vert_q \le F_{\boldsymbol{\theta}}(\mathbf{0})= \max_{P\in\mathcal{P_Y}}\, H(Y) = M.
\end{equation} 
Hence, we only need to bound the uniform convergence rate in a bounded space where $\forall\: 1\le i\le t:  \Vert \mathbf{A}_i\Vert_q   \le \frac{M}{\epsilon_i}$. 
Also, applying the derivative property of the conjugate relationship indicates that $\partial F_{\boldsymbol{\theta}}(\mathbf{z})$ is a subset of the convex hull of $\lbrace \mathbb{E}[\boldsymbol{\theta}(Y)]:\, P\in \mathcal{P_Y}\rbrace$. Therefore, for any $u\in \partial F_{\boldsymbol{\theta}}(\mathbf{z})$ we have $|| u||_2 \le L$, and $F_{\boldsymbol{\theta}} (\mathbf{z}) - \boldsymbol{\theta}(Y){\mathbf{z}}$ is $2L$-Lipschitz in $\mathbf{z}$. As a result, since $||\mathbf{X}||_p\le B$ and $||\boldsymbol{\theta}(Y)||_2\le L$ for any $\mathbf{A},\mathbf{A}^{'}$ such that $\Vert \mathbf{A}_i \Vert_2 \le \frac{M}{\epsilon_i}$,
\begin{equation}
\forall \, \mathbf{x}, \mathbf{x}^{'}, y,y^{'}:\;\: [\, F_{\boldsymbol{\theta}} (\mathbf{A} \mathbf{x}) - \boldsymbol{\theta}(y)^T{\mathbf{A} \mathbf{x}}\, ] - [\, F_{\boldsymbol{\theta}} (\mathbf{A}^{'} \mathbf{x}^{'}) - \boldsymbol{\theta}(y^{'})^T{\mathbf{A}^{'} \mathbf{x}^{'}}\, ] \, \le \, \sum_{i=1}^{t}\frac{4BML}{\epsilon_i}.
\end{equation}

Consequently, we can apply standard uniform convergence results given convexity-Lipschitzness-boundedness \cite{bartlett,kakade} as well as the vector contraction inequality from \cite{Rademacher_multiclass} to show that for any $\delta >0$ with a probability at least $1-\delta$
\begin{align}
\forall \,\mathbf{A}\in \mathbb{R}^{d\times t}, & \, \Vert\mathbf{A}_i \Vert_2 \le \frac{M}{\epsilon_i}:\;\; \mathbb{E}_{\tilde{P}} \bigl[ F_{\boldsymbol{\theta}} (\mathbf{A}\mathbf{X}) - \boldsymbol{\theta}(Y)^T\mathbf{A}\mathbf{X} \bigr]  \\ -\,&\mathbb{E}_{\hat{P}_n} \bigl[ F_{\boldsymbol{\theta}} (\mathbf{A}\mathbf{X}) - \boldsymbol{\theta}(Y)^T\mathbf{A}\mathbf{X} \bigr]
\, \le \, \frac{4BLM}{\sqrt{n}}\sum_{i=1}^{t}\frac{1}{\epsilon_i}\,\biggl( \sqrt{2}\sqrt{p-1}+ \sqrt{\frac{\log(2/\delta)}{2}} \biggr). \nonumber 
\end{align}  
Therefore, considering $\hat{\mathbf{A}}$ and $\tilde{\mathbf{A}}$ as the solution to the dual problems corresponding to the empirical and underlying cases, for any $\delta >0$ with a probability at least $1-\delta/2$ 
\begin{align}  \label{Eq: proof_Thm3_2}
& \mathbb{E}_{\tilde{P}} \bigl[ F_{\boldsymbol{\theta}} (\hat{\mathbf{A}}\mathbf{X}) - \boldsymbol{\theta}(Y)^T\hat{\mathbf{A}}\mathbf{X} \bigr] 
+\sum_{i=1}^t\epsilon_i \Vert{\hat{\mathbf{A}}}_i \Vert_q \\
-&\mathbb{E}_{\hat{P}_n} \bigl[ F_{\boldsymbol{\theta}} (\hat{\mathbf{A}}\mathbf{X}) - \boldsymbol{\theta}(Y)^T\hat{\mathbf{A}}\mathbf{X} \bigr] 
- \sum_{i=1}^t\epsilon_i \Vert{\hat{\mathbf{A}}}_i \Vert_q \le 
\frac{4BLM}{\sqrt{n}}\sum_{i=1}^{t}\frac{1}{\epsilon_i}\,\biggl( \sqrt{2(p-1)}+ \sqrt{\frac{\log(4/\delta)}{2}} \biggr). \nonumber 
\end{align} 
Since $\hat{\mathbf{A}}$ is minimizing the objective for $Q=\hat{P}_n$, 
\begin{align} \label{Eq: proof_Thm3_3}
&\mathbb{E}_{\hat{P}_n} \bigl[ F_{\boldsymbol{\theta}} (\hat{\mathbf{A}}\mathbf{X}) - \boldsymbol{\theta}(Y)^T\hat{\mathbf{A}}\mathbf{X} \bigr] 
+ \sum_{i=1}^t\epsilon_i \Vert{\hat{\mathbf{A}}}_i \Vert_q  \\
- & \mathbb{E}_{\hat{P}_n} \bigl[ F_{\boldsymbol{\theta}} (\tilde{\mathbf{A}}\mathbf{X}) - \boldsymbol{\theta}(Y)^T\tilde{\mathbf{A}}\mathbf{X} \bigr] 
- \sum_{i=1}^t\epsilon_i \Vert{\tilde{\mathbf{A}}}_i \Vert_q  \, \le \, 0. \nonumber
\end{align}
Also, since $\tilde{\mathbf{A}}$ does not depend on the samples, the Hoeffding's inequality implies that with a probability at least $1-\delta/2$ 
\begin{align} \label{Eq: proof_Thm3_4}
& \mathbb{E}_{\hat{P}_n} \bigl[ F_{\boldsymbol{\theta}} (\tilde{\mathbf{A}}\mathbf{X}) - \boldsymbol{\theta}(Y)^T\tilde{\mathbf{A}}\mathbf{X} \bigr] 
+ \sum_{i=1}^t\epsilon_i \Vert{\tilde{\mathbf{A}}}_i \Vert_q \\
- & \mathbb{E}_{\tilde{P}} \bigl[ F_{\boldsymbol{\theta}} (\tilde{\mathbf{A}}\mathbf{X}) - \boldsymbol{\theta}(Y)^T\tilde{\mathbf{A}}\mathbf{X} \bigr] 
- \sum_{i=1}^t\epsilon_i \Vert{\tilde{\mathbf{A}}}_i \Vert_q  \, \le \, \sum_{i=1}^t\frac{2BML}{\epsilon_i}\sqrt{\frac{\log(4/\delta)}{2n}}. \nonumber
\end{align}
Applying the union bound, combining \eqref{Eq: proof_Thm3_2}, \eqref{Eq: proof_Thm3_3}, \eqref{Eq: proof_Thm3_4} shows that with a probability at least $1-\delta$, we have
\begin{align} \label{Eq: proof_Thm3_5}
& \mathbb{E}_{\tilde{P}} \bigl[ F_{\boldsymbol{\theta}} (\hat{\mathbf{A}}\mathbf{X}) - \boldsymbol{\theta}(Y)^T\hat{\mathbf{A}}\mathbf{X} \bigr] 
+\sum_{i=1}^t\epsilon_i \Vert{\hat{\mathbf{A}}}_i \Vert_q  \\
- & \mathbb{E}_{\tilde{P}} \bigl[ F_{\boldsymbol{\theta}} (\tilde{\mathbf{A}}\mathbf{X}) - \boldsymbol{\theta}(Y)^T\tilde{\mathbf{A}}\mathbf{X} \bigr] 
- \sum_{i=1}^t\epsilon_i \Vert{\tilde{\mathbf{A}}}_i \Vert_q  \, \le \, \sum_{i=1}^{t}\frac{4BLM}{\epsilon_i\sqrt{n}}\biggl( \sqrt{2(p-1)}+ \frac{3}{2}\sqrt{\frac{\log(4/\delta)}{2}} \biggr). \nonumber
\end{align} 
Given \eqref{Eq: proof_Thm3_1} and \eqref{Eq: proof_Thm3_5},  the proof is complete. Note that we can improve the result in the case $q=1$ by using the same proof and plugging in the Rademacher complexity of the $\ell_1$-bounded linear functions. Here we only replace $\sqrt{2(p-1)}$ in the above bound with $\sqrt{4\log (2d)}$. 


\subsection{0-1 Loss: maximum entropy machine}
\subsubsection{$F_{\boldsymbol{\theta}}$ derivation}
Given the defined one-hot encoding $\boldsymbol{\theta}$ we define $\tilde{\mathbf{z}}=(\mathbf{z},0)$ and represent each randomized decision rule $\zeta$ with its corresponding loss vector $\mathbf{L}\in \mathbb{R}^{t+1}$ such that $ L_i=L_{\text{\rm 0-1}}(i,\zeta)$ denotes the 0-1 loss suffered by $\zeta$ when $Y=i$. It can be seen that $\mathbf{L}$ is a feasible loss vector if and only if $\forall\, i:\: 0\le L_i\le 1$ and $\sum_{i=1}^{t+1} L_i =t$. Then,
\begin{equation}
F_{\boldsymbol{\theta}} (\mathbf{z} ) =  \underset{\substack{\mathbf{p}\in \mathbb{R}^{t+1}:\: \mathbf{1}^T\mathbf{p}=1, \\ \forall i:\, 0\le p_i}}{\max}\;\;\: \underset{\substack{\mathbf{L}\in \mathbb{R}^{t+1}:\: \mathbf{1}^T\mathbf{L}=t, \\ \forall i:\, 0\le L_i \le 1}}{\min} \sum_{i=1}^{t+1} p_i ( \tilde{z}_i + L_i). 
\end{equation}
Hence, Sion's minimax theorem implies that the above minimax problem has a saddle point. Thus,
\begin{equation}
F_{\boldsymbol{\theta}} (\mathbf{z} ) =  \; \underset{\substack{\mathbf{L}\in \mathbb{R}^{t+1}:\: \mathbf{1}^T\mathbf{L}=t, \\ \forall i:\, 0\le L_i \le 1}}{\min} \max_{1\le i\le t+1} \lbrace\tilde{z}_i + L_i \rbrace.
\end{equation}
Consider $\sigma$ as the permutation sorting $\tilde{\mathbf{z}}$ in a descending order and for simplicity let $\tilde{z}_{(i)}=\tilde{z}_{\sigma(i)}$. Then,
\begin{equation}
\forall 1\le k\le t+1: \quad \max_{1\le i\le t+1} \lbrace\tilde{z}_i + L_i \rbrace \ge \frac{1}{k}\sum_{i=1}^{k} [\tilde{z}_{\sigma(i)} + L_{\sigma(i)} ] \ge \frac{k-1+\sum_{i=1}^{k} \tilde{z}_{(i)}  }{k},
\end{equation}
which is independent of the value of $L_i$'s. Therefore,
\begin{equation} \label{Eq: Proof_mmSVM_1}
\max_{1\le k\le t+1} \frac{k-1+\sum_{i=1}^{k} \tilde{z}_{(i)}  }{k} \le F_{\boldsymbol{\theta}} (\mathbf{z} ) .
\end{equation}
On the other hand, if we let $k_{\max}$ be the largest index satisfying $\sum_{i=1}^{k_{\max}} [\tilde{z}_{(i)} - \tilde{z}_{(k_{\max})}] < 1$ and define 
\begin{equation}
\forall\, 1\le j\le t+1:\quad L^*_{ \sigma(j)} = \begin{cases}
\begin{split}
&\frac{k_{\max}-1+\sum_{i=1}^{k_{\max}} \tilde{z}_{(i)}  }{k_{\max}}  -  \tilde{z}_{(j)} \;\; \text{\rm if}\; \sigma(j)\le k_{\max} \\
&1\qquad\qquad\qquad\qquad\qquad\qquad \;\:\text{\rm if}\; \sigma(j)>k_{\max}, 
\end{split}
\end{cases}
\end{equation}
we can simply check that $\mathbf{L}^*$ is a feasible point since $\sum_{i=1}^{t+1}L^*_i=t$ and $L^*_{\sigma(k_{\max})}\le 1$ so for all $i$'s $L^*_{\sigma(i)}\le 1$. Also, $L^*_{\sigma(1)}\ge 0$ because $\tilde{z}_{(1)} - \tilde{z}_{(j)} < 1$ for any $j\le k_{\max}$, so for all $i$'s $L^*_{\sigma(i)}\ge 0$. Then for this $\mathbf{L}^*$ we have
\begin{equation}
F_{\boldsymbol{\theta}} (\mathbf{z} ) \le \max_{1\le i\le t+1} \lbrace\tilde{z}_i + L^*_i \rbrace = \frac{k_{\max}-1+\sum_{i=1}^{k_{\max}} \tilde{z}_{(i)}  }{k_{\max}}.
\end{equation}
Therefore, \eqref{Eq: Proof_mmSVM_1} holds with equality and achieves its maximum at $k=k_{\max}$,
\begin{equation}
F_{\boldsymbol{\theta}} (\mathbf{z} ) = \max_{1\le k\le t+1} \frac{k-1+\sum_{i=1}^{k} \tilde{z}_{(i)}  }{k} = \frac{k_{\max}-1+\sum_{i=1}^{k_{\max}} \tilde{z}_{(i)}  }{k_{\max}}.
\end{equation}
Moreover, $\mathbf{L}^*$ corresponds to a randomized robust Bayes act, where we select label $i$ according to the probability vector $\mathbf{p}^* = \mathbf{1}- \mathbf{L}^*$ that is
\begin{equation}
\forall\, 1\le j\le t+1:\quad p^*_{ \sigma(j)} = \begin{cases}
\begin{split}
&\frac{1 - \sum_{i=1}^{k_{\max}} \tilde{z}_{(i)}  }{k_{\max}}  +  \tilde{z}_{(j)} \;\; \text{\rm if}\; \sigma(j)\le k_{\max} \\
&0\qquad\qquad\qquad\qquad \quad \; \text{\rm if}\; \sigma(j)>k_{\max}.
\end{split}
\end{cases}
\end{equation}

Given $F_{\boldsymbol{\theta}}$ we can simply derive the gradient $\nabla F_{\boldsymbol{\theta}}$ to find the entropy maximizing distribution. Here if the inequality $\sum_{i=1}^{k_{\max}} [ \tilde{\mathbf{z}}_{\sigma(i)} - \tilde{\mathbf{z}}_{(k_{\max}+1)}\, ] \ge 1$ holds strictly, which is true almost everywhere on $\mathbb{R}^{t}$, 
\begin{equation} \label{maximum entropy machine: multilabel f_5}
\forall\: 1\le i\le t:\; \;\; \bigl({\nabla F_{\boldsymbol{\theta}}}(\mathbf{z})\bigr)_i = \begin{cases} \begin{split}
&1/k_{\max} \quad \text{\rm if}\: \sigma(i)\le k_{\max},  \\
& 0 \quad \text{\rm Otherwise.}
\end{split}
\end{cases}
\end{equation}
If the inequality does not strictly hold, $F_{\boldsymbol{\theta}}$ is not differentiable at $\mathbf{z}$; however, the above vector still lies in the subgradient $\partial F_{\boldsymbol{\theta}}(\mathbf{z})$.
\subsubsection{Sufficient Conditions for Applying Theorem 1.a}
As supposed in Theorem 1.a, the space $\mathcal{X}$ should be finite in order to apply that  result. Here, we show for the proposed structure on $\Gamma(Q)$ one can relax this condition while Theorem 1.a still holds. It is because, as shown in the proofs of Theorems 2 and 3, we have
\begin{align*}
\inf_{\psi \in \boldsymbol{\Psi}}\, \max_{P\in \Gamma(\tilde{P})}\E[ L(Y,\psi(\mathbf{X}))] &=\,  \inf_{\psi \in \boldsymbol{\Psi}}\, \min_{\mathbf{A}}\,  \E_{\tilde{P}_X} \biggl[ \sup_{P_{Y|\mathbf{X}} \in \mathcal{P_Y}}\, \E\bigl[ L(Y,\psi(\mathbf{X})) | \mathbf{X}=\mathbf{x} \bigr]   \\
&\;\;\; + \E[\boldsymbol{\theta}(Y) | \mathbf{X}=\mathbf{x} ]^T \mathbf{A} \mathbf{X} \biggr] - \E_{\tilde{P}}[\boldsymbol{\theta}(Y)^T \mathbf{A} \mathbf{X} ]  + \sum_{i=1}^t {\epsilon_i} \Vert {\mathbf{A}}_i \Vert_{*} \\
&=\,  \min_{\mathbf{A}}\,  \E_{\tilde{P}_X} \biggl[ \inf_{\psi(\mathbf{x}) \in \mathcal{Z}}\sup_{P_{Y|\mathbf{X}} \in \mathcal{P_Y}}\, \E\bigl[ L(Y,\psi(\mathbf{x})) | \mathbf{X}=\mathbf{x} \bigr]   \\
&\;\;\;+ \E[\boldsymbol{\theta}(Y) | \mathbf{X}=\mathbf{x} ]^T \mathbf{A} \mathbf{X} \biggr] - \E_{\tilde{P}}[\boldsymbol{\theta}(Y)^T \mathbf{A} \mathbf{X} ]  + \sum_{i=1}^t {\epsilon_i} \Vert {\mathbf{A}}_i \Vert_{*}.
\end{align*}
Therefore, given this structure the minimax problem decouples across different $\mathbf{x}$'s. Hence, the assumption of finite $\mathcal{X}$ is no longer needed, because as long as $\boldsymbol{\theta}$ is a bounded function (which is true for the one-hot encoding $\boldsymbol{\theta}$), the rest of assumptions suffice to guarantee the existence of a saddle point given $\mathbf{X}=\mathbf{x}$ for any $\mathbf{x}$. 

\subsection{Quadratic Loss: Linear Regression}
\subsubsection{$F_{\boldsymbol{\theta}}$ derivation}
Here, we find $F_{\boldsymbol{\theta}}(\mathbf{z})\, = \, \max_{P\in \mathcal{P_Y}}\,  H(Y) + \mathbb{E}[\boldsymbol{\theta}(Y)]^T\mathbf{z} $ for $\boldsymbol{\theta}(Y)=Y$ and $\mathcal{P_Y}=\lbrace P_Y:\, \E[Y^2]\le \rho^2\rbrace$. Since for quadratic loss $H(Y)=\text{\rm Var}(Y)= \E[Y^2] - \E[Y]^2$, the problem is equivalent to
\begin{equation}
F_{\boldsymbol{\theta}}(z)\, = \, \max_{\E[Y^2]\le\rho^2}\,  \E[Y^2] - \E[Y]^2 + z\mathbb{E}[Y]
\end{equation}
As $\E[Y]^2\le\E[Y^2]$, it can be seen for the solution $\E_{P^*}[Y^2]=\rho^2$ and therefore we equivalently solve
\begin{equation}
F_{\boldsymbol{\theta}}(z)\, = \, \max_{|\E[Y]|\le\rho}\,  \rho^2 - \E[Y]^2 + z\mathbb{E}[Y] = \begin{cases}
\rho^2 + z^2/4\quad &\text{\rm if}\; |z/2|\le \rho \\
\rho  |z| \quad & \text{\rm if}\; |z/2| > \rho.
\end{cases}
\end{equation}
\subsubsection{Applying Theorem 2 while restricting $\mathcal{P_Y}$}
For the quadratic loss, we first change $\mathcal{P_Y}=\lbrace P_Y:\, \E[Y^2]\le \rho^2\rbrace$ and then apply Theorem 2. Note that by modifying $F_{\theta}$ based on the new $\mathcal{P_Y}$ we also solve a modified version of the maximum conditional entropy problem
\begin{equation}
\max_{\substack{P:\: P_{\mathbf{X},Y} \in \Gamma(Q)\\ \forall \mathbf{x}:\: P_{Y|\mathbf{X}=\mathbf{x}}\in \mathcal{P_Y} }} H(Y\vert \mathbf{X}) 
\end{equation}
In the case $\mathcal{P_Y}=\lbrace P_Y:\, \E[Y^2]\le \rho^2\rbrace$ Theorem 2 remains valid given the above modification in the maximum conditional entropy problem. This is because the inequality constraint $\E[Y^2|\mathbf{X}=\mathbf{x}]\le \rho^2$ is linear in $P_{Y|\mathbf{X}=\mathbf{x}}$, and thus the problem is still convex and strong duality holds as well. Also, when we move the constraints of $\mathbf{w}_i=\mathbb{E}_{P}\left[{\theta}_i(Y)\mathbf{X}\right]$ to the objective function, we get a similar dual problem
\begin{equation}
 \min_{\mathbf{A}}\sup_{\substack{  P_{Y|\mathbf{X}},\mathbf{w}:\\ \forall \mathbf{x}:\: P_{Y|\mathbf{X}=\mathbf{x}}\in \mathcal{P_Y} }
 }  \mathbb{E}_{Q_{\mathbf{X}}}\left[ H_P(Y|\mathbf{X}=\mathbf{x}) +  \sum_{i=1}^t\mathbb{E}[{\theta}_i(Y)|\mathbf{X}=\mathbf{x}]\mathbf{A}_i\mathbf{X} \right] + \sum_{i=1}^{t}\left[I_{C_i}(\mathbf{w}_i) -\mathbf{A}_i\mathbf{w}_i\right]
\end{equation}
Following the next steps of the proof of Theorem 2, we complete the proof assuming the modification on $F_{\boldsymbol{\theta}}$ and the maximum conditional entropy problem.

\subsubsection{Derivation of group lasso}
To derive the group lasso problem, we slightly change the structure of $\Gamma (Q)$. 
First assume the subsets $I_1,\ldots,I_k$ are disjoint. Consider a set of distributions $\Gamma_{\text{\rm GL}}(Q)$ with the following structure
\begin{align}
\Gamma_{\text{\rm GL}}(Q)=\lbrace \: P_{\mathbf{X},Y}:& \: P_\mathbf{X}=Q_\mathbf{X} \, , \label{Eq: Classform_GroupLasso} \\
&\: \forall\, 1\le j\le k:\;\;  \Vert\, \mathbb{E}_{P}\left[Y\mathbf{X}_{I_j}\right] - \mathbb{E}_{Q}\left[ Y\mathbf{X}_{I_j}\right]\Vert \le \epsilon_j \:\rbrace . \nonumber
\end{align}
Now we prove a modified version of Theorem 2,
\begin{equation}  \label{GL: dual}
\max_{P\in\Gamma_{\text{\rm GL}}(Q)} H(Y\vert \mathbf{X}) \: =\min_{\boldsymbol{\alpha}}\: \mathbb{E}_{Q} \left[\, F_{\boldsymbol{\theta}} (\boldsymbol{\alpha}^T\mathbf{X}) - Y\boldsymbol{\alpha}^T\mathbf{X}\, \right] + \sum_{j=1}^k {\epsilon_j} \Vert \boldsymbol{\alpha}_{{I_j}} \Vert_{*}.
\end{equation}
To prove this identity, we can use the same proof provided for Theorem 2. We only need to redefine $\tilde{\mathbf{E}}_j=\mathbb{E}_{Q}\left[ Y\mathbf{X}_{I_j}\right]$ and $C_j=\lbrace \mathbf{u}: \Vert \mathbf{u} - \tilde{\mathbf{E}}_j\Vert \le \epsilon_j\rbrace$ for $1\le j\le k$. Notice that here $t=1$. Using the same technique in that proof, the dual problem can be formulated as
\begin{equation}
 \min_{\boldsymbol{\alpha}}\:\sup_{P_{Y|\mathbf{X}},\mathbf{w}} \; \mathbb{E}_{Q_{\mathbf{X}}}\left[ H_P(Y|\mathbf{X}=\mathbf{x}) +  \mathbb{E}[Y|\mathbf{X}=\mathbf{x}]\boldsymbol{\alpha}^T\mathbf{X} \right] + \sum_{j=1}^{k}\left[I_{C_j}(\mathbf{w}_{I_j}) -\boldsymbol{\alpha}_{I_j}\mathbf{w}_{I_j}\right].
\end{equation}
Similarly, we can decouple and simplify the above problem to derive the RHS of \eqref{GL: dual}. Then, if we let $\Vert\cdot\Vert$ be the $\ell_q$-norm, we will get the group lasso problem with the $\ell_{1,p}$ regularizer.

If the subsets are not disjoint, we can create new copies of each feature corresponding to a repeated index, such that there will be no repeated indices after adding the new features. Note that since $P_\mathbf{X}$ has been fixed over $\Gamma_{\text{\rm GL}}(Q)$ adding the extra copies of original features does not change the maximum-conditional entropy problem. Hence, we can use the result proven for the disjoint case and derive the overlapping group lasso problem.

\end{document}